\documentclass[twoside,british,3p]{elsarticle}
\usepackage[T1]{fontenc}
\usepackage[latin9]{inputenc}
\usepackage[table,xcdraw]{xcolor}
\usepackage{pagecolor}
\usepackage{pdfcolmk}
\usepackage{units}
\usepackage{textcomp}
\usepackage{bbding}
\usepackage{amsmath}
\usepackage{amssymb}
\usepackage{graphicx}
\usepackage{tikz}
\usetikzlibrary{arrows.meta,positioning,shapes.geometric}
\usepackage{setspace}
\usepackage{esvect}
\PassOptionsToPackage{normalem}{ulem}
\usepackage{ulem}
\usepackage{blindtext}
\usepackage{multicol}
\usepackage{booktabs}
\usepackage{tabularx}
\usepackage{longtable}
\usepackage{pbox}
\usepackage{csquotes}

\usepackage{caption}
\usepackage{subcaption}

\usepackage{lscape}
\usepackage{hyperref}
\usepackage{makecell}
\usepackage{multirow}

\usepackage{enumitem}

\usepackage{natbib}

\newtheorem{theorem}{Observation}
\newcount\savedtheorem

\hypersetup{
    colorlinks=true,
    linkcolor=blue,
    filecolor=magenta,
    urlcolor=cyan,
    pdfborderstyle={/S/U}
}

\makeatletter

\journal{JORS}
\bibpunct{(}{)}{;}{a}{,}{,}

\DeclareGraphicsExtensions{.pdf,.jpeg,.png,.eps}

\usepackage{dcolumn}
\usepackage{bm}
\usepackage{babel}

\newcommand{\HThreeOrderLengths}{150}
\newcommand{\HThreeOrderLargestName}{Qwen3.8 27B}
\newcommand{\HThreeOrderSmallestName}{Qwen3 0.6B}
\newcommand{\HThreeOrderEnergyFwWithin}{136}
\newcommand{\HThreeOrderEnergyFwMedian}{0}

\newcommand{\HThreeOrderEnergyLargeMedian}{300}

\newcommand{\HThreeOrderEnergyLargeTimesAtTwenty}{11}

\newcommand{\HThreeOrderEnergySmallMedian}{250}
\newcommand{\HThreeOrderEnergySmallLastCompleted}{32}

\newcommand{\HThreeToolsEnergyFwWithin}{128}

\newcommand{\HThreeToolsEnergyLargeTimesAtTen}{10}

\newcommand{\HThreeToolsEnergySmallLastCompleted}{23}

\newcommand{\HThreeToolsTimeFwWithin}{94}

\begin{document}

\begin{frontmatter}{}

\title{When Is Smaller Greener? Right-Sizing Language Models for Agentic AI}

\author[aff1,aff2]{Sajid Siraj}
\ead{s.siraj@leeds.ac.uk}
\author[aff2]{Maria Barbati\corref{fn1}}
\ead{maria.barbati@unive.it}

\cortext[fn1]{Corresponding author}

\address[aff1]{Leeds University Business School, University of Leeds, Leeds, United Kingdom}
\address[aff2]{Ca'Foscari University of Venice, Venice, Italy}

\begin{abstract}
AI agents use a large language model (LLM) to plan and act over many steps. Which model should an agent use, so that its tasks are completed reliably with the least energy, cost and time? We treat this as a multi-criteria decision problem in which comparisons per call can mislead: a task succeeds only if all its steps succeed, and failed attempts still use resources. We therefore compare models on resources per successfully completed task. We derive crossover task lengths at which two models exchange their efficiency ranking, determined by their per-step reliability and resource use. Each objective has its own crossover. Below the shortest crossover the less reliable model dominates, above the longest the more reliable one dominates, and in between both are Pareto-efficient and preferences decide. This supports a two-stage framework: a preference-free dominance screen, followed by a multi-criteria ranking of the efficient models, subject to reliability and feasibility constraints. We also give conditions under which this structure holds when reliability and resource use change during a task. In a pilot with 15 open-weight models on two synthetic agent tasks, with measured GPU energy, the predicted crossover fell within the observed range for 72 of the 88 observed crossovers. At 20 steps, a 3-billion-parameter model used a tenth of the energy of a 27-billion one per completed task; at 100 steps, a 1.7-billion model used almost four times as much as an 8-billion one. Smaller is greener only up to a task length that can be predicted.
\end{abstract}

\begin{keyword}
Multi-criteria decision analysis; Multi-objective optimisation; Pareto efficiency;  Agentic AI; Sustainable AI; Large Language Models
\end{keyword}

\end{frontmatter}

\section{Introduction\label{sec:intro}}

Large language models (LLMs) are no longer used only to answer questions in a chat window. They now act as the reasoning engine inside AI agents, which plan, call software tools, retrieve information and take actions over many steps \citep{lei2025agents,zhu2026orchestration}. An agent that handles a customer request, for example, may read a booking record, check a policy, call a payment system and send a confirmation, with the model deciding what to do at each step. At the same time, the number of available models has grown quickly. They range from very large models offered by a few providers to small open models that can run on modest hardware, and they differ widely in reliability, price, speed and energy use.

This paper asks one question: \emph{which language model should an AI agent use, so that its tasks are completed reliably with the least energy, cost and time?} We call this the \emph{selection question}, and it is the thread that runs through the paper. We return to it at the start of each section.

The selection question matters for three reasons. First, the choice is made very often. A deployed agent may run millions of tasks, and each task may involve dozens of model calls. Second, a wrong choice has real consequences. A model that is not reliable enough can send a wrong message or change a record by mistake \citep{gritta2026process,omar2026poisoned}, while a model that is more capable than needed wastes money, time and energy \citep{yehudai2025survey}. Third, the energy used by AI inference is now a public concern, and the share used by agents is growing \citep{oviedo2026energy,elsworth2025google}.

Current practice offers two common answers. The first is to use the most capable model, as judged by a public leaderboard. The second, often presented as the sustainable choice, is to use the smallest or cheapest model that seems good enough \citep{belcak2025slm}. Both answers compare models \emph{per call}: per token, per request or per benchmark question. The same is true of most work on routing requests between models \citep{chen2024frugalgpt,ong2025routellm} and of recent work on carbon-aware routing \citep{sheshanarayana2026gar}.

We argue that per-call comparisons can mislead when the user of the model is an agent. An agent completes a task only if every step of the task succeeds. A small difference in the reliability of each step therefore becomes a large difference in the success of the whole task, and a failed task must be repeated or handed to a person. A model that uses less energy per call may then use more energy per completed task. In other words, the smaller model is not always the greener choice, and whether it is depends on the task.

This paper makes four contributions to the selection question.
\begin{enumerate}[label=(\arabic*)]
  \item We take the \emph{successfully completed task} as the unit of comparison, in the sense of the functional unit of life-cycle assessment \citep{iso14044}, and build a general decision model on it. The model covers tasks with binary or graded outcomes and actions that can or cannot be undone.
  \item We derive simple results that say when a smaller model stops being the greener, cheaper or faster choice. Each resource has its own \emph{crossover} task length, and the crossovers divide tasks into three regimes. In two of them the answer needs no preference information at all. We give a condition under which this structure survives when the reliability and the resource use of a step change along the task, and show that it holds for nearly all pairs of models in our pilot.
  \item We build these results into a two-stage decision framework. A preference-free Pareto screen removes every model that is worse on all counts, and a weighted choice is made only where a real trade-off remains.
  \item We set out how the framework can be tested, and report a pilot study with 15 open-weight models from five developers on two kinds of agent task, with measured GPU energy. The pilot supports the predicted structure, and for most pairs of models the predicted crossover lengths agree with the observed ones.
\end{enumerate}

The paper speaks to three communities. For AI research, it gives a way to predict when small models are enough for agentic tasks. For operational research, it gives a formal model of a multi-objective choice whose efficient set changes in a predictable way with task length. For sustainability research, it shows how a choice of functional unit changes the environmental comparison of AI systems.

The rest of the paper follows the selection question step by step. Section~\ref{sec:related} reviews what is known about it and identifies four gaps. Section~\ref{sec:motivation} uses a simple example to show why per-call comparisons can mislead. Section~\ref{sec:methodology} formalises our answer as a general decision model and a two-stage framework, with an illustrative example. Section~\ref{sec:evaluation} asks how the framework can be tested, and Section~\ref{sec:experiments} describes the experiments and reports their results. Section~\ref{sec:discussion} discusses what the results mean, the limits of the approach and the main objections to it. Section~\ref{sec:conclusion} returns to the selection question and sets out directions for future work.

\section{Literature review\label{sec:related}}

To answer the selection question, we need three things: a way to measure how well an agent completes tasks, a way to choose between models, and a way to count the resources that each choice uses. This section reviews the literature on each of them in turn, then the decision-analysis tools we build on, and ends with the gaps that motivate our work.

\subsection{How AI agents are evaluated}

Recent surveys describe LLM agents in terms of reasoning, tool use, coordination between several agents and memory \citep{lei2025agents}, and catalogue the protocols used to coordinate agents in practice \citep{zhu2026orchestration}. Both show that an agent depends on abilities that a single-turn benchmark does not test: planning over many steps, calling tools correctly and staying consistent over a long sequence of actions. Evaluation research has followed this shift. \citet{yehudai2025survey} survey agent evaluation and find that cost-efficiency, safety and robustness are rarely measured under realistic conditions. \citet{kapoor2025agentsmatter} argue that agents should be compared on accuracy and cost together, and show that simple agents can match complex ones at a fraction of the cost. Benchmarks such as $\tau$-bench and its successor \citep{yao2024taubench,barres2025tau2bench} test agents in realistic customer-service tasks, and measure how often an agent succeeds on every one of several repeated trials. The Berkeley Function Calling Leaderboard measures how reliably models call tools, from single calls to multi-turn agentic tasks \citep{patil2025bfcl}. Other work shows that outcome-only evaluation can hide failures such as skipped steps or misused tools \citep{gritta2026process}, and that agents can be misled by unreliable tools \citep{omar2026poisoned}.

A second line of work studies how success depends on the length of a task. \citet{kwa2025longtasks} measure the length of tasks, in human working time, that models can complete with 50\% success, and find that this length has doubled about every seven months. \citet{sinha2026illusion} show that small gains in the accuracy of single steps lead to large gains in the length of tasks a model can complete, and that errors become more likely when earlier mistakes remain in the context. \citet{mittal2026agentsrot} find that task success falls with the number of steps in a way that is well described by a single per-step reliability, which rises with model size but stays below one even for the strongest models. These studies explain why agents fail on long tasks. They do not ask what this means for the choice between models when energy, cost and time are also at stake.

\subsection{How models are chosen today}

A large literature treats model choice as a trade-off between quality and cost, mostly for single requests. FrugalGPT \citep{chen2024frugalgpt} and AutoMix \citep{aggarwal2023automix} send a request to a cheap model first and pass it to a stronger model only when needed. Other methods learn to predict which model will answer a request well enough: from past performance \citep{sakota2024flyswat}, from human preference data \citep{ong2025routellm}, from the predicted difficulty of the request \citep{ding2024hybridllm}, or from item response theory \citep{song2025irtrouter}. \citet{shah2025selectthenroute} first narrow a large set of models and then route within it, and RouterBench \citep{hu2024routerbench} offers a common benchmark for such methods. More recent work routes the steps of agentic workflows. \citet{fu2026inflation} show that the true cost of a workflow can be several times the cost of a single call, because of retries and longer reasoning, and \citet{li2026progrouter} choose a model at each step based on the progress of the task. From an operational research view, \citet{yang2026reliableworkflows} write the reliability of a sequential workflow as the product of the reliabilities of its steps, give matching rules for parallel steps and feedback loops, and allocate a token budget between steps so that the workflow is as reliable as possible within limits on cost and latency. Their question is how to configure a given set of models; the product rule they use is also the starting point of our model. \citet{erol2026costofpass} treat a model as a producer whose output is uncertain, and measure the expected cost of one correct answer: the cost of an attempt divided by its chance of success. The cheapest model by this measure depends on the kind of problem. Their unit is a single problem and their resource is money. We apply the same ratio to tasks of many steps, to energy and time as well as cost, and ask how the answer changes with task length. Across this literature, energy and carbon usually enter through cost, and the length of the task is not yet a variable in the choice of model. These are the two points we take up.

A small number of studies treat model choice as a multi-criteria decision. The closest to ours is \citet{nowak2026framework}, which frames the routing of business prompts between a cheaper and a stronger model as a managerial decision. Criterion weights are elicited from experts with the Analytic Hierarchy Process (AHP) \citep{saaty1977ahp,saaty2008ahp}, and each prompt is scored on seven criteria before Simple Additive Weighting (SAW) \citep{hwang1981madm,kaliszewski2016saw} chooses the model. On 500 business prompts the framework reaches 94.4\% response sufficiency, against 94.6\% for always using the stronger model, at 37.4\% lower cost. HugSelect \citep{joonbakhsh2026hugselect} ranks foundation models from public repositories with a weighted additive model. Both work with single requests, both apply one weighted score to all alternatives, and neither considers energy or the length of a task.

\subsection{How the resources of AI inference are counted}

The energy and carbon cost of AI has been studied since \citet{strubell2019energy} drew attention to the cost of training. For inference, \citet{luccioni2024powerhungry} show that general-purpose generative models use much more energy than task-specific ones. The ML.ENERGY benchmark \citep{chung2025mlenergy} measures the energy of inference for open models on known hardware, \citet{jegham2025hungry} estimate the energy, water and carbon of commercial models, and \citet{elsworth2025google} report measured figures for a large commercial service. \citet{oviedo2026energy} estimate the energy of inference at scale and note that agentic workflows, with many calls per user request, can have a large effect on the total. \citet{kim2026dynamicreasoning} measure the resources, latency and energy of several agent designs, and find that extra computation raises accuracy with diminishing returns and at a high cost in infrastructure. Most of this work counts energy per token or per request. Two recent studies move towards the task. \citet{panigrahy2026epg} propose \emph{energy per successful goal}: the energy of all attempts at a set of goals, failed ones included, divided by the number of goals achieved. With careful measurement on the processor, they find that agentic workflows use about four times as much energy per successful goal as a single call on the same tasks, because of planning and coordination steps as well as retries, and that the order reverses when a tool call replaces long text generation. They also point to the choice of model as a use of the unit, since a larger model may need fewer retries. \citet{liu2026costutility} propose to compare the resources used by an agent with its contribution to the task. On the side of choice, \citet{sheshanarayana2026gar} route single requests to minimise carbon subject to accuracy and latency targets, and \citet{belcak2025slm} argue that small models are sufficient, and more economical, for most agentic tasks. \citet{wang2026rethinkingscale} compare small open models used alone, as a single agent with tools and as a team of agents, and find that a single agent gives the best balance of performance and cost, while a team adds overhead for little gain. Our work follows this line by building a per-task measure of resources into a model of the choice between models.

\subsection{Decision-analysis tools}

Our framework builds on standard tools of multi-criteria decision analysis \citep{Keeney1993,belton2002mcda,greco2016multiple}. Multi-attribute decision making chooses among a finite set of alternatives that are known in advance, while multi-objective decision making searches a set of solutions defined by constraints, often infinite, for those that are efficient, that is, no worse than another solution on every objective and strictly better on none \citep{zeleny1982mcdm,miettinen1999nonlinear}. Choosing a language model is a multi-attribute problem: the candidates are a finite list of available models. It has several objectives, however, and we borrow from multi-objective decision making the idea of an efficient set, which here is found by comparing the candidates in pairs. Removing dominated alternatives before weighting the rest is itself standard practice in multi-attribute analysis \citep{belton2002mcda}. Efficient sets can be found by non-dominated sorting \citep{deb2001moea}, and ranked with methods such as SAW and TOPSIS \citep{hwang1981madm,ciardiello2023topsissaw}. Weights can be elicited with AHP or, when the decision maker can give only partial information, with the flexible and interactive trade-off method \citep{dealmeida2016fitradeoff}. Three further ideas are central to our model. \emph{Satisficing} means choosing an alternative that is good enough rather than the best \citep{simon1955behavioral}. The \emph{$\varepsilon$-constraint} method turns some objectives into constraints and shows that the solutions found in this way lie on the efficient set \citep{haimes1971bicriterion}. \emph{Reference-point} methods let the decision maker set reservation and aspiration levels for each objective \citep{wierzbicki1980reference}. Finally, the success of a task whose steps must all succeed follows the reliability of a series system, a classical result of reliability theory \citep{barlow1975reliability}.

\subsection{Gaps\label{sec:gaps}}

Returning to the selection question, the review shows that the parts of an answer exist but have not been put together. Table~\ref{tab:gaps} summarises four gaps and how this paper addresses them.

\begin{table}[htbp]
\centering
\caption{Gaps in the literature on the selection question, and how this paper addresses them.}
\label{tab:gaps}
\begin{tabularx}{\textwidth}{p{3.2cm}XX}
\toprule
Gap & What is missing & How we address it \\
\midrule
G1. Unit of comparison & Models are chosen on per-call measures. Energy per successful task has been measured and proposed as a guide to design, but not yet built into a model of choice. & A decision model built on resources per successful task (Section~\ref{sec:general}). \\
\addlinespace
G2. When is smaller better? & Task success is known to fall with task length, but there is no rule for when this reverses the ranking of models on energy, cost or time. & Crossover lengths and three regimes (Section~\ref{sec:properties}). \\
\addlinespace
G3. Role of preferences & Removing dominated alternatives before weighting is standard in multi-criteria analysis. In model choice, however, the efficient set itself changes with task length, and it is not known for which tasks preferences are needed at all. & Intervals of task length over which the efficient set stays the same, and the bands in which a weighted choice is needed (Sections~\ref{sec:properties} and \ref{sec:stage2}). \\
\addlinespace
G4. Kinds of task & Methods do not distinguish binary from graded outcomes, or actions that can be retried from actions that cannot. & Four special cases of one general model (Section~\ref{sec:variants}). \\
\bottomrule
\end{tabularx}
\end{table}

\section{Motivation: why per-call comparisons can mislead\label{sec:motivation}}

Before we formalise our answer to the selection question, we use a simple example to show why the unit of comparison matters.

Suppose model $a$ completes each step of a task correctly 95\% of the time and model $b$ 99\% of the time. On a single call the gap looks small. Now consider a customer-service task that needs 20 steps and succeeds only if every step succeeds. Model $a$ completes the task with probability $0.95^{20} \approx 0.36$ and model $b$ with probability $0.99^{20} \approx 0.82$. A failed attempt still uses energy, money and time. If failed tasks are repeated until they succeed, model $a$ needs on average about 2.8 attempts for each success and model $b$ about 1.2. Suppose model $a$ uses one tenth of the energy of model $b$ per call. Per completed task, model $a$ still uses less energy, but its advantage has fallen from ten times to about four times. For a task of 60 steps, model $b$ uses less energy per completed task than model $a$. The model that looked greener per call is no longer greener.

Life-cycle assessment faces the same problem when it compares products. It solves it with the \emph{functional unit}, the quantity of service to which all impacts are related \citep{iso14044}. Delivery services, for example, are compared per delivered parcel, not per vehicle. We argue that for agentic AI the functional unit should be one successfully completed task. \citet{panigrahy2026epg} make the same case for agentic AI, drawing on the example of power usage effectiveness in data centres. They show with a simple example that a system with a higher energy per call can use less energy per goal if it needs fewer attempts. \citet{kapoor2025agentsmatter} argue in a similar spirit that the cost of agents should be reported together with their accuracy. We take this unit as the basis for choosing between models, and ask how the answer changes with the length of the task.

The example raises four questions that a framework for the selection question must answer:
\begin{enumerate}[label=(Q\arabic*)]
  \item Which models can be ruled out for every task, whatever its length?
  \item For a task of a given length, which models are efficient, that is, not worse than another model on energy, cost and time together?
  \item When does the choice between efficient models depend on the preferences of the decision maker?
  \item How does the answer change when actions cannot be undone, or when the quality of the outcome is graded rather than binary?
\end{enumerate}
Section~\ref{sec:methodology} answers these questions in turn.

\section{A framework for the selection question\label{sec:methodology}}

This section formalises our answer to the selection question. We first give an overview of the framework and of what it measures. We then present a general decision model, state its main properties, describe its special cases and relax its assumptions. We close with the choice within a trade-off, an illustrative example and a precise definition of right-sizing.

\subsection{Overview}

Let $\mathcal{A} = \{a_1, \ldots, a_m\}$ denote the set of candidate models. More precisely, each candidate is a model offered by a provider, since the same model served by two providers can differ in price, speed and terms of use. The set changes over time, as new models appear and old ones are withdrawn. A \emph{task profile} describes the task for which a model is needed: how many steps it takes, how good the outcome must be, whether a failed attempt can be repeated, how important it is that the task is completed, and how much the deployment cares about energy, cost and time.

The framework maps a set of candidates and a task profile to a recommendation in four steps:
\begin{enumerate}[label=(\roman*)]
  \item a \emph{feasibility screen} (Stage~0) removes candidates that break a hard constraint, such as a data-protection rule;
  \item a \emph{per-step screen} (Stage~1) removes candidates that are worse than another candidate on every per-step measure; by Observation~\ref{obs:prune} below, these candidates can never be efficient, for any task;
  \item a \emph{task-level screen} finds the candidates that are efficient for the task, measured per successfully completed task, and that meet the required chance of completing it;
  \item a \emph{preference-based choice} (Stage~2) picks one of the efficient candidates, but only when more than one remains.
\end{enumerate}
The output is the recommended model together with the efficient set and, where Stage~2 was needed, the weights used. This makes each recommendation open to audit. Figure~\ref{fig:pipeline} shows the four steps and the information each one uses.

\begin{figure}[htbp]
\centering
\begin{tikzpicture}[
  node distance=5mm and 7mm,
  stage/.style={draw, rounded corners=2pt, align=center, text width=5.6cm, minimum height=11mm, font=\small, fill=gray!12},
  pref/.style={draw, rounded corners=2pt, align=center, text width=5.6cm, minimum height=11mm, font=\small, fill=white, dashed},
  input/.style={draw, align=center, text width=3.2cm, minimum height=11mm, font=\small},
  decision/.style={draw, diamond, aspect=2.4, align=center, font=\small, inner sep=0.5pt},
  result/.style={draw, double, rounded corners=2pt, align=center, text width=5.6cm, minimum height=10mm, font=\small},
  arr/.style={-{Stealth[length=2.2mm]}, thick},
  feed/.style={-{Stealth[length=1.8mm]}, densely dotted}
]
\node[stage] (s0) {\textbf{Stage 0: feasibility screen}\\ remove models that break a hard constraint (data rules, context, safety, ceilings)};
\node[stage, below=of s0] (s1) {\textbf{Stage 1: per-step screen}\\ remove models worse on every per-step measure; valid for every task length};
\node[stage, below=of s1] (s2) {\textbf{Task-level screen}\\ resources per successful task, $R_{a,k}(L) = r_{a,k}L/S_a(L)$; keep feasible models with $P_a(L,K) \ge \beta$ that are efficient};
\node[decision, below=of s2] (d) {More than one\\ efficient model?};
\node[pref, below=of d] (st2) {\textbf{Stage 2: preference-based choice}\\ weights on energy, cost and time; SAW with a TOPSIS check};
\node[result, below=of st2] (o) {\textbf{Recommendation}\\ with the efficient set and, if used, the weights};

\node[input, left=of s1] (m) {\textbf{Model measurements}\\ per task family:\\ $\rho_a$, $r_{a,e}$, $r_{a,c}$, $r_{a,t}$};
\node[input, right=of s1] (t) {\textbf{Task profile}\\ $L$, $\underline{q}$, $K$, $\beta$,\\ constraints $\Gamma$, weights};

\draw[arr] (s0) -- (s1);
\draw[arr] (s1) -- (s2);
\draw[arr] (s2) -- (d);
\draw[arr] (d) -- node[right, font=\small] {yes} (st2);
\draw[arr] (st2) -- (o);
\draw[arr] (d.west) -- ++(-1.2,0) node[above, font=\small, pos=0.5] {no} |- (o.west);

\draw[feed] (m) -- (s1);
\draw[feed] (m.south) |- (s2.west);
\draw[feed] (t.north) |- (s0.east);
\draw[feed] (t.south) |- (s2.east);
\draw[feed] (t.south) ++(0.6,0) |- (st2.east);
\end{tikzpicture}
\caption{The framework. Shaded steps use no preference information; Stage~2 (dashed) uses the weights of the task profile and runs only when more than one model is efficient. Dotted arrows show which inputs each step uses.}
\label{fig:pipeline}
\end{figure}

\subsection{What we measure\label{sec:criteria}}

The framework separates properties of a model from properties of a task. Table~\ref{tab:objectives} lists the model quantities. Per-step reliability is measured for a \emph{task family}, such as customer-service tasks with tool calls, because a model can be reliable in one kind of task and not in another. When no measurement for the task family exists, a general tool-calling score can serve as a first estimate. Context capacity, safety and data governance enter as constraints, not as objectives. The per-step quantities are not themselves the objectives. The objectives are energy, cost and time per successfully completed task (Section~\ref{sec:general}), and reliability enters them, and the chance of completing the task, through the success probability.

\begin{table}[htbp]
\centering
\caption{Model quantities used by the framework. Per-step quantities are measured for a task family.}
\label{tab:objectives}
\begin{tabularx}{\textwidth}{llX}
\toprule
Quantity & Role & Measurement \\
\midrule
$\rho_a$: per-step reliability & parameter & share of steps completed correctly in benchmark runs of the task family \\
$r_{a,e}$: energy per step & parameter & measured on known hardware for open-weight models; estimated with bounds for closed models \\
$r_{a,c}$: cost per step & parameter & tokens per step multiplied by the provider's price \\
$r_{a,t}$: time per step & parameter & measured response time per step \\
Context capacity & constraint & provider specification \\
Safety & constraint & published safety evaluations and robustness tests \\
Governance & constraint & deployment class, data residency, licence \\
\bottomrule
\end{tabularx}
\end{table}

Energy is the one quantity that cannot always be measured directly, since providers of closed models do not report it. Each value therefore carries a flag that says whether it was measured or estimated, and estimated values carry a range. Every value also carries a date and a source, because the market changes and the record of where numbers came from is part of the audit trail.

The task profile is described by the parameters in Table~\ref{tab:params} below. They can be set in two ways. In the \emph{deliberative} way, a team sets them once for an agent or a type of task, through a short questionnaire. In the \emph{automated} way, a small model reads each task description and estimates them at run time, as the router of \citet{nowak2026framework} does for its criteria. The rest of the framework is the same in both cases.

\subsection{Stage 0: feasibility screen}

Let $\Gamma$ denote the hard constraints that follow from the task profile: the context the task needs, the deployment classes allowed by data-protection rules, a minimum safety level, any list of allowed or excluded providers, and any ceiling on cost, time or carbon per task. The feasible set is
\begin{equation}
  \mathcal{A}_\Gamma \;=\; \{\, a \in \mathcal{A} \;:\; a \text{ meets every constraint in } \Gamma \,\}.
\end{equation}
These constraints come first because no amount of reliability or efficiency can make up for a model that may not legally process the data.

\subsection{A general model of resources per successful task\label{sec:general}}

We now give the general model that answers the selection question for a single task. Consider a candidate model $a$ and a task that needs $L$ steps, where a step is one call to the model. We write $k \in \{e, c, t\}$ for the three resources, energy, cost and time, so that $r_{a,k}$ is the amount of resource $k$ that model $a$ uses per step.

An attempt at a task is \emph{successful} if it reaches the goal of the task: the final state is correct, for example the booking is changed as the customer asked. This is the unit on which all models are compared. The outcome of one attempt may be binary (the goal is reached or not) or graded, for example a quality score $q$ between 0 and 1. The decision maker sets a \emph{quality floor} $\underline{q}$, and an attempt counts as a success if its quality reaches the floor. For binary outcomes $\underline{q} = 1$. We write $S_a(L)$ for the probability that one attempt by model $a$ succeeds. Reaching the goal is not the same as getting every step right, since an agent can sometimes notice an error and correct it in a later step. In the \emph{basic case} we take the stricter view: steps succeed or fail independently and any failed step spoils the attempt, as in a series system \citep{barlow1975reliability} and as in the sequential workflows of \citet{yang2026reliableworkflows}, so that
\begin{equation}
  S_a(L) = \rho_a^{\,L}.
  \label{eq:basic}
\end{equation}
Section~\ref{sec:relax} relaxes this assumption and allows errors to be corrected.

A failed attempt may be repeated. We allow at most $K$ attempts per task. $K = 1$ describes an \emph{irreversible} task, such as a payment that cannot be undone. $K = \infty$ describes a task that can be retried freely, such as a data extraction whose result is checked automatically. We assume that each attempt runs all $L$ steps before its outcome is known. This is the cautious case, because it charges every failed attempt in full.

\begin{theorem}[Resource per successful task]\label{obs:lemma}
Let $S_a(L) > 0$ and $K \ge 1$. The probability that model $a$ completes the task within $K$ attempts is
\begin{equation}
  P_a(L, K) = 1 - \bigl(1 - S_a(L)\bigr)^{K},
  \label{eq:complete}
\end{equation}
and the expected amount of resource $k$ used per successfully completed task, counting the resources of failed attempts as well as of the successful one, is
\begin{equation}
  R_{a,k}(L) = \frac{r_{a,k}\, L}{S_a(L)},
  \label{eq:persuccess}
\end{equation}
whatever the value of $K$.
\end{theorem}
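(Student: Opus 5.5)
The plan is to treat the attempts at one task as independent Bernoulli trials with success probability $S = S_a(L)$, stopped at the first success or after $K$ trials, whichever comes first. Independence between attempts is implicit in the basic case: each attempt reruns all $L$ steps, and the per-attempt success probability is $S_a(L)$. I would make this assumption explicit at the start.

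\emph{Completion probability.} The task fails only if all $K$ attempts fail. Each attempt fails with probability $1-S$, independently of the others. So the failure probability is $(1-S)^K$, which gives \eqref{eq:complete} directly. For $K=\infty$ the same formula gives $P_a(L,\infty)=1$ as a limit, since $S>0$.

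\emph{Resource used per task.} Let $N$ be the number of attempts made, so $N = \min(G, K)$ with $G$ geometric with parameter $S$. Every attempt runs all $L$ steps whether it succeeds or not. The resource consumed on the task is therefore $r_{a,k} L N$, and its expectation is $r_{a,k} L\,\mathbb{E}[N]$. I would compute $\mathbb{E}[N]$ by the tail-sum formula
\begin{equation*}
\mathbb{E}[N]=\sum_{j=1}^{K}\Pr(N\ge j)=\sum_{j=0}^{K-1}(1-S)^j=\frac{1-(1-S)^K}{S}=\frac{P_a(L,K)}{S}.
\end{equation*}
The second equality holds because $N\ge j$ exactly when the first $j-1$ attempts fail. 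The sum is a finite geometric series, and for $K=\infty$ it becomes the usual mean $1/S$.

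\emph{Interpreting ``per successfully completed task''.} This is the step I expect to need the most care. When $K$ is finite, some tasks end with no success at all, so the quantity cannot simply be a conditional expectation given success. I would define it as the ratio of expected resource spent per task to expected number of successes per task. This ratio is also the long-run resource per completed task over many independent tasks, by the strong law of large numbers applied separately to numerator and denominator (a renewal--reward argument). Each task yields at most one success, so the expected number of successes per task is $P_a(L,K)$. The ratio is
\begin{equation*}
\frac{r_{a,k} L\,P_a(L,K)/S}{P_a(L,K)} = \frac{r_{a,k}L}{S_a(L)}.
\end{equation*}
The factor $P_a(L,K)$ cancels, which explains why the result holds whatever the value of $K$. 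Substituting $S_a(L)=\rho_a^L$ from \eqref{eq:basic} recovers the basic case, although the argument itself uses only $S_a(L)>0$.
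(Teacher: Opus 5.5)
Your proposal is correct and follows the paper's proof. Independent attempts give the completion probability. The expected number of attempts is the truncated geometric sum $\bigl(1-(1-S)^K\bigr)/S$, and dividing the expected resource use by $P_a(L,K)$ removes the dependence on $K$. Your renewal--reward justification for reading the quantity as a ratio of expectations makes precise a step the paper takes without comment.
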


\noindent\textit{Proof.} Attempts succeed independently with probability $S = S_a(L)$, which gives \eqref{eq:complete}. The expected number of attempts, with at most $K$ allowed, is $\sum_{i=0}^{K-1} (1-S)^i = \bigl(1-(1-S)^K\bigr)/S$. Each attempt uses $r_{a,k} L$ of resource $k$. Dividing the expected use by the probability of success within $K$ attempts gives \eqref{eq:persuccess}. $\square$

Observation~\ref{obs:lemma} says that the resource per successful task does not depend on how many retries are allowed. The retry limit changes how often the task is completed at all, but not how efficiently each success is produced. The same ratio underlies the cost-of-pass of \citet{erol2026costofpass} for single problems, and \citet{panigrahy2026epg} reach the same result for energy per successful goal when retries are independent, and note that it no longer holds when later attempts differ from earlier ones, a point we return to in Section~\ref{sec:discussion}. This lets us separate two questions: whether a model is reliable enough for the task, which is a constraint, and how much it uses to deliver each success, which gives the objectives.

The general decision problem follows. For a task with length $L$, quality floor $\underline{q}$, retry limit $K$ and a required probability of completion $\beta$, choose a model from the feasible set $\mathcal{A}_\Gamma$ that
\begin{equation}
\begin{aligned}
  \text{minimises}\quad & \bigl(R_{a,e}(L),\; R_{a,c}(L),\; R_{a,t}(L)\bigr)
     \quad \bigl[\text{and, optionally, maximises } \bar{q}_a(L)\bigr] \\
  \text{subject to}\quad & P_a(L, K) \ge \beta ,
\end{aligned}
\label{eq:general}
\end{equation}
where $\bar{q}_a(L)$ is the expected quality of a successful output. The problem has several objectives, and we do not combine them at this point. We first find the models that are efficient for \eqref{eq:general}, and leave the choice among them to Stage~2. The constraint is a \emph{chance constraint}: it requires the task to be completed with probability at least $\beta$. By \eqref{eq:complete}, it is the same as a lower bound on the success of a single attempt,
\begin{equation}
  S_a(L) \;\ge\; 1 - (1-\beta)^{1/K}.
  \label{eq:floor}
\end{equation}
For example, with $\beta = 0.9$ and $K = 3$, each attempt must succeed with probability at least $0.54$. Table~\ref{tab:params} lists the parameters of the model, who sets them, and how they relate to the criteria of task profiles used in earlier work \citep{nowak2026framework}.

\begin{table}[htbp]
\centering
\caption{Parameters of the general model \eqref{eq:general}.}
\label{tab:params}
\begin{tabularx}{\textwidth}{llX}
\toprule
Symbol & Set by & Meaning and example \\
\midrule
$\rho_a$ & measurement & Per-step reliability of model $a$ on the task family. \\
$r_{a,e}, r_{a,c}, r_{a,t}$ & measurement & Energy, cost and time per step of model $a$. \\
$L$ & task profile & Number of steps in one attempt; follows from the expected length of the task. \\
$\underline{q}$ & decision maker & Quality floor; follows from the required accuracy. $\underline{q} = 1$ for tasks that are simply done or not done. \\
$K$ & decision maker & Maximum number of attempts; follows from whether actions can be undone. $K = 1$ for irreversible actions, $K = \infty$ for free retries. \\
$\beta$ & decision maker & Required probability that the task is completed; follows from the risk of a failed action. \\
\bottomrule
\end{tabularx}
\end{table}

\subsection{Three properties of the general model\label{sec:properties}}

We now state three properties of the model in the basic case \eqref{eq:basic}. They answer questions (Q1) to (Q3) of Section~\ref{sec:motivation}. Each has a short proof and a plain reading. Throughout, model $a$ \emph{dominates} model $b$ if $a$ is no worse than $b$ on every objective and strictly better on at least one, and a model is \emph{efficient} if no other feasible model dominates it.

\begin{theorem}[Pruning that holds for every task length]\label{obs:prune}
If $\rho_b \ge \rho_a$ and $r_{b,k} \le r_{a,k}$ for every resource $k$, with at least one of these inequalities strict, then $b$ dominates $a$ for every task length $L \ge 1$, and $b$ meets the constraint \eqref{eq:floor} whenever $a$ does.
\end{theorem}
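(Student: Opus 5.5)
The plan is to work directly from the closed form of Observation~\ref{obs:lemma} in the basic case \eqref{eq:basic}, $R_{a,k}(L) = r_{a,k} L / \rho_a^{L}$, and compare $b$ with $a$ one resource at a time. We assume throughout that all reliabilities lie in $(0,1]$, so that $S_a(L) > 0$ and Observation~\ref{obs:lemma} applies. The monotonicity we need is that $R_{\cdot,k}(L)$ is nondecreasing in $r_{\cdot,k}$ and nonincreasing in $\rho$ for fixed $L \ge 1$. Since $\rho \mapsto \rho^{L}$ is strictly increasing on $(0,1]$ for $L \ge 1$, we have $\rho_b^{L} \ge \rho_a^{L} > 0$. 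Combining this with $r_{b,k} \le r_{a,k}$ gives
\begin{equation*}
  R_{b,k}(L) = \frac{r_{b,k}\,L}{\rho_b^{L}} \;\le\; \frac{r_{a,k}\,L}{\rho_b^{L}} \;\le\; \frac{r_{a,k}\,L}{\rho_a^{L}} = R_{a,k}(L)
\end{equation*}
for each $k \in \{e,c,t\}$. So $b$ is no worse than $a$ on every objective, for every $L \ge 1$.

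The second step is strictness: at least one of the three objectives must improve strictly. We split on which hypothesis is strict. If $r_{b,k} < r_{a,k}$ for some $k$, the first inequality in the chain is strict for that $k$, because $L/\rho_b^{L} > 0$. If instead $\rho_b > \rho_a$, then $\rho_b^{L} > \rho_a^{L}$ strictly. The second inequality is then strict for every $k$ with $r_{a,k} > 0$.

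The one delicate point is the degenerate case in which the only strict inequality is in reliability and all resource uses are zero. Then every $R_{\cdot,k}$ vanishes and no objective improves strictly. I would handle this by assuming, as is natural, that per-step resource use is positive ($r_{a,k} > 0$ for at least one $k$), and note this explicitly. I expect this to be the only real obstacle. An alternative route is to count reliability as entering through the optional quality or completion objective, but the positivity assumption is cleaner.

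Finally, the constraint. By \eqref{eq:floor}, $a$ is feasible exactly when $\rho_a^{L} \ge 1-(1-\beta)^{1/K}$. Since $\rho_b^{L} \ge \rho_a^{L}$, the same bound holds for $b$. Equivalently, $P_\cdot(L,K)$ in \eqref{eq:complete} is increasing in $S_\cdot(L)$. Nothing in the argument depends on $L$ beyond $L \ge 1$, so the dominance holds for every task length. This is what makes the Stage~1 screen valid for every task profile.
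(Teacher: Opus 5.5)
Your proof is correct and follows the paper's own argument. Both use the monotonicity of $R_{a,k}(L) = r_{a,k}L/\rho_a^{L}$ in $r_{a,k}$ and $\rho_a$, the same case split for strictness, and the fact that the constraint depends on the model only through $\rho^{L}$; your explicit assumptions that $\rho_a > 0$ and that some $r_{a,k} > 0$ make precise a point the paper leaves implicit when it claims strictness for every $k$ whenever $\rho_b > \rho_a$.
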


\noindent\textit{Proof.} By \eqref{eq:basic} and \eqref{eq:persuccess}, $R_{a,k}(L) = r_{a,k} L / \rho_a^{L}$, which increases with $r_{a,k}$ and decreases with $\rho_a$. Hence $R_{b,k}(L) \le R_{a,k}(L)$ for every $k$. The inequality is strict for every $k$ if $\rho_b > \rho_a$, and strict for resource $k$ if $r_{b,k} < r_{a,k}$. The constraint depends on the model only through $\rho^L$, which is at least as large for $b$. $\square$

In plain terms, a model that is worse than another on every per-step measure can be removed once, for all tasks. This answers (Q1), and it is the Stage~1 screen of the framework. The screen has two useful properties. First, it depends only on the order of the candidates on each per-step quantity, so no choice of units, scaling or weights can change it. Second, it uses no information about the task beyond the task family, so one screen serves every task in that family. The costly part of the work, measuring the candidates, is shared across all tasks and users.

\begin{theorem}[One crossover for each resource]\label{obs:cross}
Let $\rho_a < \rho_b$ and $r_{a,k} < r_{b,k}$ for a resource $k$. Then
\begin{equation}
  \ln \frac{R_{a,k}(L)}{R_{b,k}(L)} \;=\; \ln \frac{r_{a,k}}{r_{b,k}} \;+\; L \,\ln \frac{\rho_b}{\rho_a},
  \label{eq:lines}
\end{equation}
and model $a$ uses less of resource $k$ per successful task than model $b$ if and only if
\begin{equation}
  L \;<\; L^{*}_{k} \;=\; \frac{\ln (r_{b,k}/r_{a,k})}{\ln (\rho_b/\rho_a)} .
  \label{eq:crossover}
\end{equation}
\end{theorem}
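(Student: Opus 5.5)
The plan is to substitute the basic case \eqref{eq:basic} into the formula of Observation~\ref{obs:lemma} and take logarithms. By \eqref{eq:persuccess} with $S_a(L)=\rho_a^{L}$ we have $R_{a,k}(L) = r_{a,k} L \rho_a^{-L}$, and likewise for $b$. All quantities are positive, since $\rho_a,\rho_b \in (0,1]$ and $S_a(L)>0$ is assumed in Observation~\ref{obs:lemma}, so the ratio $R_{a,k}(L)/R_{b,k}(L)$ is well defined and positive. The factor $L$ appears in both numerator and denominator and cancels. This leaves
\[
\frac{R_{a,k}(L)}{R_{b,k}(L)} = \frac{r_{a,k}}{r_{b,k}}\left(\frac{\rho_b}{\rho_a}\right)^{L}.
\]
Taking the natural logarithm of both sides gives \eqref{eq:lines} directly.

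For the crossover, note that $a$ uses less of resource $k$ per successful task exactly when this ratio is below one, that is, when the right-hand side of \eqref{eq:lines} is negative. The right-hand side is affine in $L$. Its intercept is $\ln(r_{a,k}/r_{b,k})<0$, because $r_{a,k}<r_{b,k}$. Its slope is $\ln(\rho_b/\rho_a)>0$, because $\rho_a<\rho_b$. Since the slope is strictly positive, we may divide by it without reversing the inequality. The condition then rearranges to $L < \ln(r_{b,k}/r_{a,k})/\ln(\rho_b/\rho_a) = L^*_k$, and the steps reverse, which gives the ``if and only if''.

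There is no real obstacle here. The only care needed is to check the signs: the slope must be strictly positive, which is why the hypothesis $\rho_a<\rho_b$ is strict, and the intercept must be negative, so that $L^*_k>0$. It is also worth remarking, though it is not needed for the statement, that at $L=L^*_k$ the two models tie, and for $L>L^*_k$ model $b$ is strictly better. I would note that $L$ ranges over positive integers while $L^*_k$ may be any positive real, so the equivalence should be read with $L$ as an integer task length. No use of $K$ is needed, since by Observation~\ref{obs:lemma} $R_{a,k}$ does not depend on it.
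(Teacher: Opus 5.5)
Your proof is correct and follows the same route as the paper: substitute $S_a(L)=\rho_a^{L}$ into \eqref{eq:persuccess}, cancel $L$, take logarithms, and read off where the right-hand side of \eqref{eq:lines}, an affine function of $L$ with negative intercept and positive slope, is negative. You spell out the sign checks that the paper's two-line proof leaves implicit.
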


\noindent\textit{Proof.} Equation \eqref{eq:lines} follows directly from \eqref{eq:persuccess} and \eqref{eq:basic}. Its right-hand side is negative exactly when $L < L^{*}_{k}$. $\square$

In plain terms, the less reliable model starts with an advantage on each resource, and this advantage shrinks as tasks get longer. Equation~\eqref{eq:lines} is a straight line in $L$, and its slope $\ln(\rho_b/\rho_a)$ is \emph{the same for energy, cost and time}. Only the starting advantage differs between the three resources. Each resource therefore has its own crossover length $L^{*}_{k}$, and the resource on which the less reliable model has the largest starting advantage is the last to cross.

\begin{theorem}[Three regimes]\label{obs:regimes}
Let $\rho_a < \rho_b$ and $r_{a,k} < r_{b,k}$ for all three resources, and let $L_{\min} = \min_k L^{*}_{k}$ and $L_{\max} = \max_k L^{*}_{k}$. Then
\begin{enumerate}[label=(\roman*)]
  \item for $L < L_{\min}$, model $a$ dominates model $b$;
  \item for $L > L_{\max}$, model $b$ dominates model $a$;
  \item for $L_{\min} < L < L_{\max}$, neither model dominates the other.
\end{enumerate}
The middle band is empty when the ratios $r_{b,k}/r_{a,k}$ are equal for all three resources.
\end{theorem}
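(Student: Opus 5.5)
The plan is to reduce everything to Observation~\ref{obs:cross}, applied separately to each of the three resources. The hypotheses $\rho_a < \rho_b$ and $r_{a,k} < r_{b,k}$ hold for every $k \in \{e,c,t\}$, so that observation applies to each resource in turn. It gives three crossover lengths $L^{*}_{k} = \ln(r_{b,k}/r_{a,k})/\ln(\rho_b/\rho_a)$, all strictly positive, because both logarithms are positive. For each $k$ it also tells us that $R_{a,k}(L) < R_{b,k}(L)$ if and only if $L < L^{*}_{k}$. Since the right-hand side of \eqref{eq:lines} is a line with positive slope that vanishes exactly at $L^{*}_{k}$, we further get $R_{a,k}(L) > R_{b,k}(L)$ if and only if $L > L^{*}_{k}$, with equality only at $L = L^{*}_{k}$. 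Dominance is defined only on the objectives $R_{a,k}$, so no other ingredient is needed; the chance constraint \eqref{eq:floor} plays no role here.

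\textbf{Cases (i) and (ii).}
\begin{enumerate}[label=(\roman*)]
  \item If $L < L_{\min}$, then $L < L^{*}_{k}$ for all three $k$. Hence $a$ is strictly better on every objective, which is dominance.
  \item If $L > L_{\max}$, then $L > L^{*}_{k}$ for all $k$. Hence $b$ is strictly better on every objective.
\end{enumerate}

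\textbf{Case (iii).} If $L_{\min} < L < L_{\max}$, pick a resource $k_1$ attaining $L_{\min}$ and a resource $k_2$ attaining $L_{\max}$. Then:
\begin{itemize}
  \item $L > L^{*}_{k_1}$ gives $R_{b,k_1}(L) < R_{a,k_1}(L)$, so $a$ is strictly worse on $k_1$ and cannot dominate $b$;
  \item $L < L^{*}_{k_2}$ gives $R_{a,k_2}(L) < R_{b,k_2}(L)$, so $b$ cannot dominate $a$.
\end{itemize}
Neither model dominates the other, as claimed.

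\textbf{The final sentence.} If all ratios $r_{b,k}/r_{a,k}$ coincide, then all $L^{*}_{k}$ coincide, because the denominator $\ln(\rho_b/\rho_a)$ is common to all three resources. So $L_{\min} = L_{\max}$ and the open interval in (iii) is empty. The converse also holds, since $L^{*}_{k}$ is injective in the ratio, but the statement does not ask for it.

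\textbf{Main obstacle.} The only delicate point is bookkeeping at the boundaries. The statement deliberately uses strict inequalities and excludes $L = L^{*}_{k}$, where ties occur and dominance may hold only weakly. I would therefore check that the strict inequalities used in each case come from the strict "if and only if" of Observation~\ref{obs:cross} and from the positive slope of \eqref{eq:lines}. I would add a remark that $L$ is treated as a real variable here; for integer task lengths the same conclusions hold for the integers lying in each interval.
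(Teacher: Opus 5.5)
Your proof is correct and follows the paper's own argument: apply Observation~\ref{obs:cross} to each resource separately and read off the three cases, with the final sentence following because equal ratios give equal crossover lengths $L^{*}_{k}$. You spell out what the paper leaves implicit, namely the reverse strict inequality for $L > L^{*}_{k}$ from the common positive slope in \eqref{eq:lines}, and the choice of the resources attaining $L_{\min}$ and $L_{\max}$ in case (iii).
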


\noindent\textit{Proof.} Parts (i) to (iii) follow from Observation~\ref{obs:cross} applied to each resource. If the three ratios are equal, the three crossover lengths in \eqref{eq:crossover} are equal and $L_{\min} = L_{\max}$. $\square$

In plain terms, for short tasks the less reliable model is better on every resource, and for long tasks the more reliable model is. In both cases the choice needs no preference information. Only in the band between them does the choice depend on what the decision maker values, for example low carbon over speed. This answers (Q2) and (Q3). The width of the band depends on how differently the three resources compare. If energy, cost and time all differ between two models by the same factor, for example because price simply follows energy use, the band disappears and preferences are never needed for that pair. When tasks can be retried freely, the efficient set can change only at crossover lengths, of which there are at most $3m(m-1)/2$ for $m$ candidate models. When the chance constraint \eqref{eq:floor} applies, the set can also change where a model stops meeting it. In the basic case, model $a$ meets the constraint if and only if
\begin{equation}
  L \;\le\; \bar{L}_a \;=\; \frac{\ln\bigl(1 - (1-\beta)^{1/K}\bigr)}{\ln \rho_a},
  \label{eq:feasible}
\end{equation}
which adds at most $m$ \emph{feasibility thresholds}. A model that reaches its threshold leaves the feasible set, and a model that it dominated may then become efficient. Together, the crossover lengths and the feasibility thresholds divide task length into at most $3m(m-1)/2 + m + 1$ intervals. Within each interval the efficient set stays the same, so it can be computed once per interval and stored. This answers a question that a weighted choice made separately for each task cannot: for which task lengths the choice does not depend on preferences at all.

\subsection{Special cases and variants\label{sec:variants}}

We now turn to question (Q4). The general model has four main special cases. They follow from two questions about the task: is the outcome binary or graded, and can a failed attempt be repeated? Table~\ref{tab:variants} summarises them.

\begin{table}[htbp]
\centering
\caption{Four special cases of the general model \eqref{eq:general}.}
\label{tab:variants}
\begin{tabularx}{\textwidth}{lXX}
\toprule
 & Failures can be retried ($K = \infty$) & Irreversible or limited retries ($K$ finite) \\
\midrule
Binary outcome ($\underline{q} = 1$) & \textbf{V1.} Minimise energy, cost and time per successful task. No constraint. \newline \emph{Example:} data extraction with an automatic check. & \textbf{V2.} Same objectives, subject to the chance constraint \eqref{eq:floor}. \newline \emph{Example:} changing an airline booking. \\
\addlinespace
Graded outcome ($\underline{q} < 1$) & \textbf{V3.} Success means reaching the quality floor. Quality above the floor has no extra value. \newline \emph{Example:} a summary that must contain a set of required facts. & \textbf{V4.} Quality floor as a constraint and expected quality $\bar{q}_a$ as a fourth objective. \newline \emph{Example:} drafting a report, where better quality has real value. \\
\bottomrule
\end{tabularx}
\end{table}

\paragraph{V1: binary outcome, free retries} This is the pure efficiency question. Observations~\ref{obs:prune} to~\ref{obs:regimes} apply as stated.

\paragraph{V2: binary outcome, irreversible actions} By Observation~\ref{obs:lemma}, the objectives are the same as in V1, and only the feasible set changes. The constraint \eqref{eq:floor} can be strict. By \eqref{eq:feasible}, with $K = 1$ and $\beta = 0.9$, a model with $\rho = 0.95$ meets it only for tasks of up to 2 steps, and a model with $\rho = 0.99$ only for tasks of up to 10 steps. Beyond these lengths neither model is feasible. This result is useful on its own, because it tells the decision maker that the task should be redesigned, for example by adding a human check before each irreversible step, rather than given to a different model.

\paragraph{V3: graded outcome with a quality floor} Success now means that the quality of the output reaches the floor, so $S_a(L)$ depends on $\underline{q}$. Quality above the floor has no extra value. This is the \emph{satisficing} rule of \citet{simon1955behavioral}: the decision maker asks for the least resource-intensive model that is good enough.

\paragraph{V4: graded outcome, floor and maximisation} Here the decision maker also values quality above the floor, so expected quality becomes a fourth objective. The floor acts as a reservation level, below which no output is acceptable, and above it quality is traded against resources. This is the logic of reference-point methods \citep{wierzbicki1980reference}. V3 and V4 are not separate frameworks. Following the $\varepsilon$-constraint idea \citep{haimes1971bicriterion}, any objective can be turned into a constraint, and the choices made under V3 and V4 both lie on the efficient set of the same general problem. The difference between the V3 and V4 recommendations measures the \emph{price of quality}: the extra energy, cost and time that the decision maker pays for quality above the floor. Observation~\ref{obs:cross} still holds for the three resources. Quality must, however, be part of every comparison. In the per-step screen of Observation~\ref{obs:prune}, model $b$ removes model $a$ only if it is also no worse on expected quality, and the same holds in the task-level screen and in Stage~2, where quality becomes a fourth criterion. The quality objective usually keeps more models efficient, so the bands in which preferences matter become wider.

\paragraph{The effect of the quality floor} Raising the floor lowers the success probability of every model. If it lowers the success of the smaller model more than that of the larger model, which we expect and test in Section~\ref{sec:evaluation}, the crossover lengths move to shorter tasks. The efficient choice then depends on two features of the task, its length and its quality floor. It can be shown as a map with three regions: the smaller model dominates, both are efficient, and the larger model dominates.

\subsection{Relaxing the assumptions\label{sec:relax}}

The basic case makes three simplifying assumptions. We now relax each one and state what remains true.

\paragraph{Reliability that changes along the task} Per-step reliability need not be constant. Errors can become more likely as a task goes on, for example when earlier mistakes remain in the context \citep{sinha2026illusion}. Survival analysis of multi-turn conversations models the turn at which a model first fails, and shows how the risk of that failure changes from turn to turn \citep{li2026timetoinconsistency}. In this case we replace $\rho_a^{L}$ by the measured success curve $S_a(L)$. The same step covers agents that correct some of their errors, because $S_a(L)$ then measures how often the goal is reached, not how often every step is right. Observation~\ref{obs:lemma} does not depend on the form of $S_a(L)$ and holds unchanged. Observation~\ref{obs:prune} holds if $S_b(L) \ge S_a(L)$ for every $L$. The single crossover of Observation~\ref{obs:cross} holds if the ratio $S_a(L)/S_b(L)$ falls as $L$ grows. While resource use per step stays constant, the lines in \eqref{eq:lines} are then no longer straight, but they still differ only in their starting values, so the three regimes of Observation~\ref{obs:regimes} remain. The condition on $S_a(L)/S_b(L)$ can be checked directly from benchmark data, and we test it in Section~\ref{sec:evaluation}.

\paragraph{Resource use that changes along the task} The amount of a resource used per step need not be constant either. In our pilot (Section~\ref{sec:experiments}), the energy of a step grows as the conversation gets longer, by up to about four times over 150 steps for the smallest models and by a quarter or less for the largest. Let $r_{a,k}(j)$ be the amount of resource $k$ that model $a$ uses at step $j$, and let $C_{a,k}(L) = \sum_{j=1}^{L} r_{a,k}(j)$ be the amount used by one attempt of $L$ steps. Observation~\ref{obs:lemma} holds with $r_{a,k}L$ replaced by $C_{a,k}(L)$, so that $R_{a,k}(L) = C_{a,k}(L)/S_a(L)$. The basic model is the special case in which $r_{a,k}(j) = r_{a,k}$ at every step, and reliability and resource use may now both change along the task. 

The pruning rule of Observation~\ref{obs:prune} extends to
variable per-step resource use if
$C_{b,k}(L) \leq C_{a,k}(L)$ for every resource $k$ and
$S_b(L) \geq S_a(L)$ for all $L$, with strict dominance
in at least one objective at each task length.
Under these conditions, model $a$ can be permanently
discarded. Comparing average per-step consumption
over a fixed horizon alone does not guarantee this property.

\begin{theorem}[One crossover when steps vary]\label{obs:growth}
Let $g_k(L) = C_{a,k}(L)/C_{b,k}(L)$ be the ratio of the resources used by one attempt of models $a$ and $b$, and $s(L) = S_a(L)/S_b(L)$ the ratio of their success probabilities. If, for every $L \ge 1$,
\begin{equation}
  \frac{g_k(L+1)}{g_k(L)} \;\ge\; \frac{s(L+1)}{s(L)},
  \label{eq:growth}
\end{equation}
then there is a length $L^{*}_{k}$, possibly infinite, such that model $a$ uses less of resource $k$ per successful task than model $b$ if and only if $L < L^{*}_{k}$.
\end{theorem}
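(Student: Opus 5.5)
Consider the ratio of resources per successful task. By the extension of Observation~\ref{obs:lemma} to variable resource use, $R_{a,k}(L) = C_{a,k}(L)/S_a(L)$, so
\[
  h_k(L) \;=\; \frac{R_{a,k}(L)}{R_{b,k}(L)} \;=\; \frac{C_{a,k}(L)\,S_b(L)}{C_{b,k}(L)\,S_a(L)} \;=\; \frac{g_k(L)}{s(L)} .
\]
Model $a$ uses less of resource $k$ per successful task than $b$ exactly when $h_k(L) < 1$. The aim is therefore to show that $\{L \ge 1 : h_k(L) < 1\}$ is an initial segment of the positive integers. I assume all quantities are positive, which holds whenever $S_a(L), S_b(L) > 0$ and each attempt uses a positive amount of resource, so the ratios are well defined.

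The key step is monotonicity. Dividing \eqref{eq:growth} by $s(L+1)/s(L) > 0$ gives
\[
  \frac{h_k(L+1)}{h_k(L)} = \frac{g_k(L+1)/g_k(L)}{s(L+1)/s(L)} \ge 1 .
\]
Hence $h_k$ is non-decreasing in $L$. This is the discrete analogue of the straight line \eqref{eq:lines}: in logarithms, $\ln h_k(L+1) - \ln h_k(L) = \Delta \ln g_k(L) - \Delta \ln s(L) \ge 0$.

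The crossover is then defined as $L^{*}_{k} = \min\{L \ge 1 : h_k(L) \ge 1\}$, with $L^{*}_{k} = \infty$ if the set is empty. Three cases cover the possibilities:
\begin{itemize}
  \item If $h_k(1) \ge 1$, then $L^{*}_{k} = 1$ and the set where $a$ is better is empty. This is consistent with the ``if and only if'' statement, since no $L \ge 1$ satisfies $L < 1$.
  \item If $h_k(L) < 1$ for all $L$, then $L^{*}_{k} = \infty$.
  \item Otherwise, monotonicity gives $h_k(L) \ge 1$ for every $L \ge L^{*}_{k}$, and minimality gives $h_k(L) < 1$ for every $L < L^{*}_{k}$.
\end{itemize}
This yields the stated equivalence. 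If one wants $L^{*}_{k}$ as a real number, as in \eqref{eq:crossover}, any value in $(L^{*}_{k}-1, L^{*}_{k}]$ works on the integers.

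The argument has no real obstacle; the main care is in the bookkeeping. First, ties at $h_k(L) = 1$ must be assigned to ``not less'', so that the boundary is well defined. Second, the inequality \eqref{eq:growth} must be used as a ratio, which needs positivity of $s$ and $g_k$. Third, the result is a one-sided threshold only, with no claim that a crossover exists: $L^{*}_{k}$ may be $1$ or $\infty$. Finally, when $\rho$ and $r$ are constant, \eqref{eq:growth} holds with $g_k$ constant and $s(L+1)/s(L) = \rho_a/\rho_b < 1$, so the result recovers Observation~\ref{obs:cross}.
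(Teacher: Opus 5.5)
Your proof is correct and is essentially the paper's argument. The paper writes $\ln\bigl(R_{a,k}(L)/R_{b,k}(L)\bigr) = \ln g_k(L) - \ln s(L)$, notes that condition~\eqref{eq:growth} makes this sequence non-decreasing, and concludes that it is negative on an initial run of lengths; you make the same argument with the ratio $h_k = g_k/s$ instead of its logarithm, and you are more explicit about positivity, ties at $h_k(L) = 1$, and the cases $L^{*}_{k} = 1$ and $L^{*}_{k} = \infty$.
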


\noindent\textit{Proof.} By the extended Observation~\ref{obs:lemma}, $\ln\bigl(R_{a,k}(L)/R_{b,k}(L)\bigr) = \ln g_k(L) - \ln s(L)$. Condition \eqref{eq:growth} says that this quantity does not fall from one length to the next. A sequence that never falls is negative on an initial run of lengths, possibly empty or unending, and not negative after it. $\square$

In plain terms, in the basic case the right-hand side of \eqref{eq:growth} is $\rho_a/\rho_b$, which is below one when model $a$ is the less reliable. The condition then says that the resource ratio of the two models may fall as the task gets longer, but by no more than the factor $\rho_a/\rho_b$ per step. Three cases follow.
\begin{enumerate}[label=(\roman*)]
  \item \emph{Proportional growth.} If both models follow the same profile, $r_{a,k}(j) = r_{a,k}\, h_k(j)$ and $r_{b,k}(j) = r_{b,k}\, h_k(j)$ for some common $h_k$, then $g_k$ is constant, the condition holds, and in the basic case the crossover length is exactly \eqref{eq:crossover}. The formula therefore holds whenever two models grow at the same rate, whatever that rate is.
  \item \emph{The less reliable model grows faster.} Then $g_k$ rises, the condition holds, and in the basic case $\ln\bigl(R_{a,k}(L)/R_{b,k}(L)\bigr) \ge \ln g_k(1) + L \ln(\rho_b/\rho_a)$. The crossover therefore comes no later than for two models that keep their first-step use throughout.
  \item \emph{The more reliable model grows faster.} Then $g_k$ falls. While it falls slowly enough, there is still one crossover, but later. If it falls faster than \eqref{eq:growth} allows, the less reliable model can regain its advantage on longer tasks, and the two curves can cross more than once.
\end{enumerate}
Condition~\eqref{eq:growth} ensures that, for each resource $k$,
the relative resource efficiency of models $a$ and $b$ can switch
at most once, although a crossover need not occur.

If all three resources exhibit a crossover from model $a$ being
more efficient at shorter task lengths to model $b$ being more
efficient at longer task lengths, the three-regime structure of
Observation~\ref{obs:regimes} is preserved, subject to appropriate treatment of ties at crossover points and the feasibility constraints in~\eqref{eq:floor}.
Unlike in the constant per-step resource model, the
log-resource-ratio curves are no longer necessarily linear or parallel. Consequently, the crossover thresholds, and hence the width of the interval in which decision-maker preferences are
required, depend on the resource-growth and reliability profiles of the competing models.
Condition~\eqref{eq:growth} can be evaluated using measured per-step resource consumption and task-success data, as illustrated in the pilot study in Section~\ref{sec:experiments}.

\paragraph{Failures detected early} If a failed attempt stops at the step that fails, an attempt uses on average $(1-\rho_a^{L})/(1-\rho_a)$ steps, and the resource per successful task becomes $R_{a,k}(L) = r_{a,k} \sum_{j=1}^{L} \rho_a^{-j}$. This still increases with $r_{a,k}$ and decreases with $\rho_a$, so Observation~\ref{obs:prune} holds. The factor that depends on reliability is still the same for all three resources, so there is still one crossover for each resource, in the same order. The crossover lengths are now found numerically, and they are longer than in the basic case, because failed attempts cost less.

\paragraph{From energy to carbon} Carbon emissions equal energy multiplied by the power usage effectiveness of the data centre and the carbon intensity of the local electricity grid. When two models run in the same data centre at the same time, these factors are the same for both and cancel in \eqref{eq:crossover}, so the carbon crossover equals the energy crossover. When the models run in different regions, or at different times of day, the factors enter the starting advantage and move the carbon crossover. This is where the location and timing of inference, studied by carbon-aware routing \citep{sheshanarayana2026gar} and by measurement at provider level \citep{elsworth2025google}, enter our framework.

\paragraph{Uncertainty in measured values} The values of $\rho_a$ and $r_{a,k}$ are estimates. Because the denominator $\ln(\rho_b/\rho_a)$ in \eqref{eq:crossover} is small when two models have similar reliability, small errors in reliability can move a crossover by many steps. We therefore report every crossover length with a confidence interval obtained by resampling the benchmark runs. A wide interval is itself useful: it shows the decision maker that, for tasks of that length, the two models cannot be told apart on the evidence available.

\subsection{Stage 2: choosing within a trade-off\label{sec:stage2}}

When the task-level screen leaves one model, the framework recommends it, and no preferences are needed. When it leaves more than one, the selection question has no answer without knowing what the decision maker values. Stage~2 provides this answer for the efficient set $E(L)$ at the task length $L$.

The values of each objective are rescaled to the range from 0 to 1 within the efficient set,
\begin{equation}
  v_k(a) \;=\; \frac{\max_{b \in E(L)} R_{b,k}(L) - R_{a,k}(L)}{\max_{b \in E(L)} R_{b,k}(L) - \min_{b \in E(L)} R_{b,k}(L)}, \qquad a \in E(L),
\end{equation}
so that the best model on resource $k$ scores 1 and the worst scores 0. If all efficient models use the same amount of a resource, that resource cannot separate them, and we set $v_k(a) = 1$ for every $a$, so that it adds the same amount to every score. Because the scale is set within $E(L)$, a weight states how much the decision maker cares about the \emph{range} of a resource among the efficient models at that task length, not about the resource in general. The range changes with $L$, so the same weights can express different trade-offs at different task lengths. This is one more reason to set weights for a task profile rather than once for all tasks. Weights $w_e, w_c, w_t$, which sum to one, express how much the deployment cares about carbon, cost and time. In the deliberative way, a team sets them once, using AHP pairwise comparisons with the usual consistency check \citep{saaty1977ahp,saaty2008ahp}, or the flexible and interactive trade-off method when the team can give only partial information \citep{dealmeida2016fitradeoff}. In the automated way, they follow from the task profile through a mapping that the organisation can inspect and change. The default rule is SAW,
\begin{equation}
  S(a) \;=\; w_e\, v_e(a) + w_c\, v_c(a) + w_t\, v_t(a), \qquad a \in E(L),
\end{equation}
and TOPSIS \citep{hwang1981madm,ciardiello2023topsissaw} is computed alongside it as a check, on the same rescaled values and with the same weights. When the two methods disagree on the top choice, the framework reports the disagreement. Disagreement does not by itself mean that the models are close. SAW adds up the scores, so a large advantage on one resource can make up for a small loss on another. TOPSIS ranks models by their distance from the best and worst values, and favours models without a very poor value on any resource. The two can therefore rank the same models differently even when the gaps between them are large. A disagreement tells the decision maker that the recommendation depends on how losses on one resource are traded against gains on another, and that this choice should be made with care.

Two features keep Stage~2 small and safe. First, it is needed only inside the bands of Observation~\ref{obs:regimes}. Outside them, the answer does not depend on the weights, so disputes about weights cannot change it. Second, the chance constraint \eqref{eq:floor} is applied before Stage~2. A decision maker who puts all the weight on cost can therefore never push an unreliable model to the top for a task whose actions cannot be undone. This plays the role of the risk veto of \citet{nowak2026framework}, extended from a choice between two models to a choice among many, and derived from the required chance of completing the task rather than set by hand.

\subsection{An illustrative example\label{sec:worked}}

We now bring the parts of the framework together in an example. Table~\ref{tab:example} lists six candidate models with illustrative per-step values. The values are not measurements of named commercial models.

\begin{table}[htbp]
\centering
\caption{Illustrative candidate models. Per-step reliability $\rho$; energy in Wh per step; cost in US cents per step; time in seconds per step; context in thousands of tokens.}
\label{tab:example}
\begin{tabular}{lrrrrr}
\toprule
Model & $\rho$ & Energy & Cost & Time & Context \\
\midrule
A & 0.995 & 0.400 & 3.00 & 2.7 & 400 \\
B & 0.990 & 0.220 & 1.20 & 2.1 & 200 \\
C & 0.970 & 0.060 & 0.25 & 1.5 & 128 \\
\textcolor{gray}{D} & \textcolor{gray}{0.960} & \textcolor{gray}{0.075} & \textcolor{gray}{0.30} & \textcolor{gray}{1.8} & \textcolor{gray}{128} \\
\textcolor{gray}{E} & \textcolor{gray}{0.985} & \textcolor{gray}{0.320} & \textcolor{gray}{1.80} & \textcolor{gray}{3.6} & \textcolor{gray}{1000} \\
F & 0.930 & 0.030 & 0.10 & 0.9 & 32 \\
\bottomrule
\end{tabular}
\end{table}

\paragraph{Stage 1} Model D is worse than model C on every per-step quantity, and model E is worse than model B. By Observation~\ref{obs:prune}, neither D nor E can be efficient for any task length, so both are removed once. Model E would come back only for a task that needs more than 200 thousand tokens of context. Stage~0 would then remove models B, C, D and F, and model A does not dominate E, because E uses less energy and cost per step.

\paragraph{Efficient sets by task length} For tasks that can be retried freely (V1), Table~\ref{tab:frontier} shows the efficient set as task length grows. It changes at the crossover lengths of Observation~\ref{obs:cross}. For example, model F starts with half the energy of model C per step, but C is more reliable, and the energy crossover between them is at about 17 steps.

\begin{table}[htbp]
\centering
\caption{Efficient models per successful task, by task length, for the candidates in Table~\ref{tab:example} (variant V1).}
\label{tab:frontier}
\begin{tabular}{ll}
\toprule
Task length $L$ & Efficient models \\
\midrule
1 to 12 & F \\
13 to 16 & C, F \\
17 to 21 & B, C, F \\
22 to 49 & B, C \\
50 to 76 & A, B, C \\
77 to 181 & A, B \\
182 or more & A \\
\bottomrule
\end{tabular}

\end{table}

\paragraph{Three task profiles} Consider three tasks given to the same set of models.
\begin{itemize}
  \item \emph{A short triage task} with 3 steps, which can be retried freely. Only model F is efficient, so the framework recommends F without asking for weights. Per successful task, F uses about 0.11 Wh against 1.22 Wh for model A, a saving of about 90\%.
  \item \emph{A customer-service task} with 40 steps, which can be retried freely. Models B and C are both efficient. Per successful task, C uses about 8.1 Wh, 34 cents and 203 seconds, and B uses about 13.2 Wh, 72 cents and 126 seconds. A deployment that cares mostly about carbon or cost chooses C, and one that cares mostly about speed chooses B. This is the only one of the three tasks for which Stage~2 is needed. Note that the smallest model, F, is not efficient here: per successful task it uses about 21.9 Wh, almost three times as much as C, although it uses half as much energy per step.
  \item \emph{The same customer-service task, with an irreversible payment step}, allowing at most $K = 2$ attempts with required completion probability $\beta = 0.95$. By \eqref{eq:floor}, each attempt must succeed with probability at least 0.78. At 40 steps only model A meets this, so the framework recommends A. Per successful task, A uses about 19.6 Wh, more than twice as much as C. This is the price of not being able to undo an action.
\end{itemize}

The same six models thus give three different recommendations, and the reason for each one can be read directly from the task profile.

Figure~\ref{fig:parallel} shows why models B and C form a trade-off at 40 steps. On a log scale, the ratio of their resources per successful task is a straight line in task length for each resource, and the three lines are parallel. C is better on time up to about 17 steps, on energy up to about 64 and on cost up to about 77. Between the first and the last crossing, which includes 40 steps, each model is better on some resource.

\begin{figure}[htbp]
\centering
\includegraphics[width=0.75\textwidth]{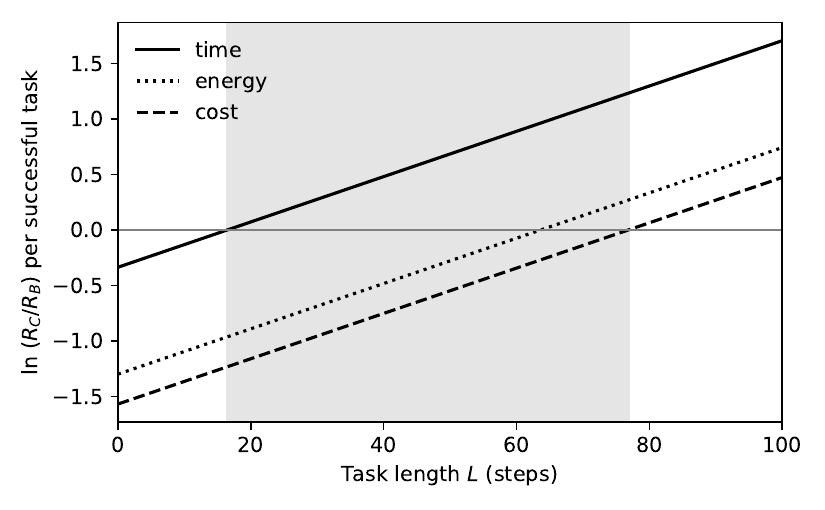}
\caption{Resources per successful task of model C relative to model B, on a log scale, against task length. Below zero, C uses less. Each line crosses zero at the crossover length of its resource. In the shaded band neither model is better on every resource.}
\label{fig:parallel}
\end{figure}

\paragraph{Stage 2 in full} For tasks of 50 to 76 steps, three models are efficient (Table~\ref{tab:frontier}), so this is where Stage~2 has the most work to do. Table~\ref{tab:stage2} shows it for a task of 60 steps that can be retried freely. Panel (a) gives the resources per successful task and the values rescaled within the efficient set. Model C is best on energy and cost, and model A is best on time. Model B is best on nothing, but it is close to the best on every resource. The rescaling also shows the point made in Section~\ref{sec:stage2}: A is only 5\% faster than B, yet A scores 1 on time and B 0.97, while C scores 0 because it takes more than twice as long.

Panel (b) gives the SAW and TOPSIS scores for five sets of weights. With equal weights, B ranks first under both methods, and it stays first when energy or speed gets most of the weight. C ranks first when energy and cost together carry most of the weight and time very little. A, the fastest model, ranks first only when nearly all the weight is on time. Over a grid of all weights in steps of 0.01, SAW puts B first for 66\% of them, C for 33\% and A for less than 1\%. SAW and TOPSIS agree on the top choice for 97\% of the weights. In this example they disagree only near the boundaries where the top choice changes, as in the last row of panel (b), where B and C have almost the same scores. The recommendation, together with these shares, tells the decision maker how much the choice depends on the exact weights.

\begin{table}[htbp]
\centering
\caption{Stage 2 for a task of 60 steps with free retries, for the efficient models A, B and C of Table~\ref{tab:example}. (a) Resources per successful task, and values rescaled within the efficient set (1 is the best). (b) SAW and TOPSIS scores for five sets of weights on energy, cost and time, and the top choice under each method.}
\label{tab:stage2}
\textit{(a) Decision matrix}\\[2pt]
{\small\begin{tabular}{lrrrrrr}
\toprule
Model & Energy (Wh) & Cost (cents) & Time (s) & $v_e$ & $v_c$ & $v_t$ \\
\midrule
A & 32.4 & 243 & 219 & 0.00 & 0.00 & 1.00 \\
B & 24.1 & 132 & 230 & 0.83 & 0.74 & 0.97 \\
C & 22.4 & 93 & 560 & 1.00 & 1.00 & 0.00 \\
\bottomrule
\end{tabular}
}\\[8pt]
\textit{(b) Scores}\\[2pt]
{\small\setlength{\tabcolsep}{4pt}\begin{tabular}{lcrrrcrrrc}
\toprule
Weights & $(w_e, w_c, w_t)$ & \multicolumn{3}{c}{SAW} & & \multicolumn{3}{c}{TOPSIS} & Top choice \\
 & & A & B & C & & A & B & C & SAW / TOPSIS \\
\midrule
Equal & (0.33, 0.33, 0.33) & 0.33 & 0.85 & 0.67 & & 0.41 & 0.83 & 0.59 & B / B \\
Energy first & (0.60, 0.20, 0.20) & 0.20 & 0.84 & 0.80 & & 0.24 & 0.83 & 0.76 & B / B \\
Speed first & (0.20, 0.20, 0.60) & 0.60 & 0.89 & 0.40 & & 0.68 & 0.91 & 0.32 & B / B \\
Energy and cost & (0.45, 0.45, 0.10) & 0.10 & 0.80 & 0.90 & & 0.14 & 0.79 & 0.86 & C / C \\
Energy, little cost & (0.80, 0.04, 0.16) & 0.16 & 0.85 & 0.84 & & 0.17 & 0.83 & 0.83 & B / C \\
\bottomrule
\end{tabular}
}
\end{table}

\subsection{Right-sizing\label{sec:rightsizing}}

The example shows what \emph{right-sizing} means once the selection question is asked per successful task. Right-sizing is not choosing the smallest model. It is choosing, among the models that meet the required chance of completing the task, a model that is efficient in energy, cost and time per successful task, and, within a trade-off band, the one that best matches the stated preferences. Under this definition right-sizing can point to a larger model, as it did for the irreversible task above.

Right-sizing has an environmental benefit that the framework can measure. For each decision, the saving is the difference in energy, and hence in carbon, per successful task between the recommended model and a reference policy, such as always using the most capable model or always using the smallest. Because agentic workloads contain many short, repeated subtasks, the number of decisions where a smaller model is efficient can be large. The framework also shows the opposite case: tasks where a policy of always using small models costs more energy than it saves. Any figure for total savings across a whole service rests on assumptions about the mix of tasks and the carbon intensity of the grid. We therefore report such figures as scenarios with stated bounds, not as measurements.

\section{How can we test the framework?\label{sec:evaluation}}

The framework gives a specific answer to the selection question: it predicts which models are efficient for a task of a given length, and when a smaller model stops being the greener, cheaper or faster choice. This section asks how these predictions can be tested. We first state the claims, then the measurements and the design that test them.

\subsection{Claims to be tested}

The framework rests on four claims, which we state as hypotheses that an unfavourable result would reject.

\begin{description}
  \item[H1 (structure).] On agentic benchmarks, the success curves of models meet the single-crossing condition of Section~\ref{sec:relax}: for pairs of models, the ratio of their success curves falls as tasks get longer, so that their resources per successful task cross at most once. That success falls with length is not itself the test. When success means that every step up to step $L$ is correct, it cannot rise with $L$ by definition. The implication from a decreasing success-probability ratio
to a single crossover holds when the ratio of cumulative
resource consumption between the two models is constant
across task lengths. When this ratio varies, the more general
sufficient condition is given by~\eqref{eq:growth} in
Observation~\ref{obs:growth}, which accounts for both
resource consumption and task-success probabilities.
  \item[H2 (prediction).] Crossover lengths predicted from per-step measurements on one set of tasks agree with the crossovers observed on a separate set of tasks, within their confidence intervals.
  \item[H3 (decision value).] Choices made by the framework have lower regret than choices made by per-call rules. Because no single model need be best on all three resources, regret is measured for each resource separately: it is the extra energy, cost or time per successful task of the chosen model, compared with the model that, in hindsight, uses the least of that resource among the models that meet the chance constraint. A rule has lower regret if it is no worse on any resource and better on at least one. As a single summary we also report the weighted regret, $\sum_k w_k \bigl(1 - v_k(a)\bigr)$, computed with the weights of the task profile on the scale of Section~\ref{sec:stage2}. The per-call rules are: always use the most capable model, always use the smallest model, use the top model of a public leaderboard, a cascade that escalates to a stronger model on failure \citep{chen2024frugalgpt}, and the single-stage weighted rule of \citet{nowak2026framework} applied to all models.
  \item[H4 (savings).] On a substantial share of tasks, the framework selects a smaller model than the most capable one, with no loss in the required chance of completion. On another share of tasks, it selects a larger model than the smallest one and so avoids the energy cost of failed attempts.
\end{description}
H1 is the foundation of the model, and H3 is the result on which the practical value of the framework depends. If H1 fails, the crossover results do not hold as stated. If H3 fails, per-call rules are good enough, and the extra measurement effort is not worth it.

\subsection{What is measured and how}

The design follows one principle: every value is measured on a common test harness, not taken from figures reported by providers, so that the comparison is fair and can be repeated.

\begin{table}[htbp]
\centering
\caption{Measurements used in the evaluation.}
\label{tab:measures}
\begin{tabularx}{\textwidth}{lX}
\toprule
Quantity & How it is measured \\
\midrule
Task success & Automatic check of the final state of each task against the correct state, as in $\tau$-bench \citep{yao2024taubench,barres2025tau2bench}. \\
Per-step reliability & Estimated from complete runs, by the step at which each run first fails. \\
Success curve $S_a(L)$ & Share of successful runs, grouped by the number of steps a task needs. \\
Energy & For open-weight models, measured per inference on known hardware, using published measurements \citep{chung2025mlenergy} and checked against an estimate based on measured speed. For closed models, estimated with stated bounds. \\
Cost & Tokens used, multiplied by the provider's published price. \\
Time & Measured response time for each step, summed over the task. \\
\bottomrule
\end{tabularx}
\end{table}

Each pair of model and task is run several times, so that every measured value carries a confidence interval. Confidence intervals for crossover lengths and for regret are obtained by resampling runs. Tasks are split into two sets: one to estimate per-step values, and one, held out, to test the predictions (H2 and H3). This prevents the framework from being tested on the same data used to set it up.

\subsection{Benchmarks and models}

The benchmarks must vary in task length, so that crossovers can be observed, and must check success automatically. The retail and airline domains of $\tau$-bench meet both needs: tasks differ in the number of tool calls they require, and the airline domain contains actions that cannot be undone, which makes it suitable for variant V2. Function-calling tasks from the Berkeley leaderboard \citep{patil2025bfcl} add a second task family. For the graded variants V3 and V4, a task family with a scored outcome is needed; we use a smaller set of such tasks for illustration.

The main study uses open-weight models of different sizes, served by one provider through one harness. This has three advantages: their energy use can be measured by third parties, their behaviour is not changed without notice, and all models are run under the same conditions. Closed models are added later as a check of how far the results carry over, using public measurements.

\subsection{Robustness}

We check the results in three ways. First, we vary the weights in Stage~2 and report how often the recommendation changes. Second, we vary the measured values within their confidence intervals and report how often the efficient set changes. Third, we test the framework over time: we fix the measurements at one date, and ask whether the framework ranks models released later correctly from their measured per-step values alone.

\section{Experiments and results\label{sec:experiments}}

This section reports a first experiment that tests the framework's answer to the selection question. Before the full study of Section~\ref{sec:evaluation}, we ask a narrower question: with real models on real multi-step tasks, does success fall with task length as the model assumes (H1), and do the crossovers fall where per-step measurements predict (H2)? To answer it cleanly, we use tasks whose length we can set exactly, and models whose energy we can measure rather than estimate. We use two kinds of task and models from several developers, so that the results do not rest on one task or one family of models. The pilot also gives a first, partial view of H3 and H4.

\subsection{Pilot study design}

\paragraph{Tasks} Both tasks are conversations with an agent in which every step has one correct answer that a program can check. The full conversation is kept, so the agent sees its own earlier replies and an error carries forward, as in a real agent. A task of length $L$ succeeds if the first $L$ steps are all correct. Runs are never stopped early, so one run of 150 steps gives the outcome of a task of every length from 1 to 150, together with the energy and time of every step. Every model sees the same 15 tasks of each kind, generated from fixed seeds, and the temperature is set to zero.
\begin{itemize}
  \item \emph{Order task.} The agent keeps the running total of a customer order. At each step it receives a tool result describing one order line, to be added or removed, and must reply with the new total. The arithmetic is deliberately easy, because the tool result states the amount of each line: errors come from keeping track over many steps, not from hard sums.
  \item \emph{Tools task.} The agent serves a customer whose account holds five orders, each with a few items and a delivery city. At each step the customer asks for one change, and the agent must reply with the one correct tool call: change the delivery city, add an item, remove an item, or refund an item. About half of the requests refer back to earlier ones, such as ``add two mugs to that same order'', ``send order F4 to the same city as order B3'' or ``refund every vase in order H9'', which needs the number of vases the order holds now. The tools reply only ``ok'', so the agent must keep track of the account itself. A step is correct if the tool and every argument are right.
\end{itemize}
The two tasks fail in different ways. In the order task an error is a wrong number. In the tools task it is a wrong action or a wrong detail, most often a refund of the wrong quantity or a change that the agent has lost track of.

\paragraph{Models and measurement} We use 15 open-weight models from five developers, from 0.6 to 27 billion parameters (Table~\ref{tab:pilot}): Qwen (Alibaba), Gemma (Google), Llama (Meta), Ministral (Mistral AI) and GLM (Zhipu AI). They are served through Ollama on a single NVIDIA RTX 5060 Ti graphics card (GPU) with 16\,GB of memory. Each model runs alone on the card and fully in its memory, which the test harness checks before every model. Energy is the power of the GPU board, read 20 times a second through the NVIDIA management library and summed over each call. It leaves out the processor, memory and cooling, so it is a lower bound on the energy used. This boundary is the complement of the one used by \citet{panigrahy2026epg}, who measure the processor only; as they stress, the boundary should be stated with every energy figure, since results measured within different boundaries cannot be compared directly. Time is the time per call reported by the server. We do not report money cost, because on one's own hardware it follows from energy and time.

\paragraph{Estimation} Per-step reliability $\rho$ is estimated from the step at which each run first fails, by the maximum-likelihood estimate under a constant per-step rate: the number of steps passed divided by the number of steps passed plus the number of failures. The success curve $S(L)$ is the share of runs whose first failure comes after step $L$. The confidence interval for $\rho$ is the exact (Clopper-Pearson) binomial interval, treating each step passed or failed as one trial. Unlike a resampling interval, it does not shrink to a single point when no run passes a step or no run fails: for a model that fails at the first step of all 15 runs, it still allows a per-step reliability of up to 0.22. Confidence intervals for crossover lengths come from 1000 resamples of runs within each model. When nearly all resamples show no crossover within 150 steps, we report that share instead of an interval.

\paragraph{Reproducibility} The raw logs of every call, the harness and the analysis are in the public repository of the framework. One command recreates every number, table and figure in this section from the committed logs, without a GPU or network access.

\subsection{Results}

\begin{table}[htbp]
\centering
\caption{Pilot results per model and task: estimated per-step reliability, mean GPU energy and time per step, and GPU energy per successfully completed task at three task lengths. $\infty$: no run completed a task of that length. $^\dagger$: removed by the per-step screen, because another model is at least as reliable and uses no more energy or time per step.}
\label{tab:pilot}
\textit{(a) Order task}\\[2pt]
{\small\begin{tabular}{lrrrrrrr}
\toprule
Model & $\hat\rho$ & 95\% CI & J/step & s/step & \multicolumn{3}{c}{kJ per successful task} \\
 & & & & & $L=10$ & $L=20$ & $L=50$ \\
\midrule
Llama 3.2 1B & 0.4643 & 0.2751--0.6613 & 7 & 0.07 & $\infty$ & $\infty$ & $\infty$ \\
Gemma 3 1B$^\dagger$ & 0.0000 & 0.0000--0.2180 & 8 & 0.15 & $\infty$ & $\infty$ & $\infty$ \\
Qwen3 0.6B & 0.9176 & 0.8677--0.9531 & 29 & 0.23 & 0.3 & 1.7 & $\infty$ \\
Qwen3 1.7B & 0.9800 & 0.9673--0.9888 & 30 & 0.22 & 0.1 & 0.5 & 2.4 \\
Llama 3.2 3B & 0.9878 & 0.9797--0.9933 & 30 & 0.21 & 0.1 & 0.4 & 1.8 \\
Ministral 3 3B & 0.9880 & 0.9799--0.9934 & 51 & 0.45 & 0.3 & 0.7 & 3.1 \\
Gemma 3 4B & 0.9882 & 0.9799--0.9937 & 54 & 0.53 & 0.5 & 1.1 & 3.5 \\
Llama 3.1 8B & 0.9905 & 0.9841--0.9948 & 59 & 0.37 & 0.4 & 0.9 & 2.5 \\
GLM-4 9B$^\dagger$ & 0.9444 & 0.9100--0.9686 & 70 & 0.49 & 0.6 & 3.1 & $\infty$ \\
Qwen3 8B & 0.9980 & 0.9948--0.9995 & 91 & 0.56 & 0.6 & 1.3 & 4.0 \\
Ministral 3 8B$^\dagger$ & 0.9969 & 0.9932--0.9989 & 124 & 0.95 & 0.8 & 2.1 & 6.4 \\
Ministral 3 14B & 0.9985 & 0.9957--0.9997 & 136 & 0.94 & 1.0 & 2.1 & 5.8 \\
Qwen3 14B$^\dagger$ & 0.9980 & 0.9948--0.9994 & 153 & 0.90 & 1.1 & 2.4 & 7.0 \\
Gemma 3 12B & 0.9991 & 0.9966--0.9999 & 154 & 1.39 & 1.2 & 2.4 & 9.0 \\
Qwen3.8 27B & 0.9995 & 0.9974--1.0000 & 236 & 1.61 & 2.1 & 4.2 & 11.0 \\
\bottomrule
\end{tabular}
}\\[8pt]
\textit{(b) Tools task}\\[2pt]
{\small\begin{tabular}{lrrrrrrr}
\toprule
Model & $\hat\rho$ & 95\% CI & J/step & s/step & \multicolumn{3}{c}{kJ per successful task} \\
 & & & & & $L=10$ & $L=20$ & $L=50$ \\
\midrule
Gemma 3 1B & 0.7727 & 0.6530--0.8669 & 16 & 0.24 & 1.3 & $\infty$ & $\infty$ \\
Llama 3.2 1B & 0.8404 & 0.7505--0.9078 & 19 & 0.17 & 0.7 & $\infty$ & $\infty$ \\
Qwen3 0.6B$^\dagger$ & 0.8352 & 0.7427--0.9047 & 23 & 0.19 & 0.9 & 6.4 & $\infty$ \\
Qwen3 1.7B & 0.9153 & 0.8641--0.9518 & 23 & 0.18 & 0.2 & 1.5 & $\infty$ \\
Llama 3.2 3B & 0.9296 & 0.8865--0.9601 & 34 & 0.23 & 0.3 & 3.7 & $\infty$ \\
Ministral 3 3B & 0.9513 & 0.9209--0.9725 & 50 & 0.45 & 0.3 & 1.8 & $\infty$ \\
Gemma 3 4B$^\dagger$ & 0.9312 & 0.8891--0.9610 & 51 & 0.50 & 0.7 & 7.0 & $\infty$ \\
Llama 3.1 8B & 0.9533 & 0.9241--0.9736 & 70 & 0.45 & 0.6 & 2.3 & $\infty$ \\
Qwen3 8B$^\dagger$ & 0.9476 & 0.9150--0.9704 & 73 & 0.47 & 0.7 & 3.0 & 48.4 \\
Ministral 3 8B & 0.9704 & 0.9517--0.9833 & 75 & 0.60 & 0.6 & 1.8 & 12.2 \\
GLM-4 9B$^\dagger$ & 0.9470 & 0.9141--0.9700 & 84 & 0.61 & 0.8 & 2.8 & $\infty$ \\
Gemma 3 12B & 0.9710 & 0.9527--0.9837 & 118 & 1.13 & 1.1 & 3.7 & 42.1 \\
Qwen3 14B$^\dagger$ & 0.9689 & 0.9493--0.9825 & 128 & 0.81 & 1.8 & 3.6 & 31.7 \\
Ministral 3 14B & 0.9779 & 0.9638--0.9876 & 129 & 0.91 & 1.0 & 3.0 & 21.8 \\
Qwen3.8 27B & 0.9878 & 0.9797--0.9933 & 199 & 1.47 & 2.0 & 4.2 & 12.2 \\
\bottomrule
\end{tabular}
}
\end{table}

\paragraph{Per-step values and the screen} Reliability rises broadly with size, but far from strictly, and so does energy per step (Table~\ref{tab:pilot}, Figure~\ref{fig:pilot-models}). The tools task is the harder one: the most reliable model makes an error about once in 80 steps there, against once in 2000 steps in the order task. The per-step screen removes four models in the order task and five in the tools task before task length is considered. Some of these removals are not visible from model size or developer. In the order task, Qwen3 14B and Ministral 3 8B are removed by Qwen3 8B, which is at least as reliable and uses less energy and time per step. In the tools task, Qwen3 8B is itself removed, by Llama 3.1 8B and by Ministral 3 3B, a model of less than half its size. Gemma 3 1B fails at the first step of every order task, because it gives wrong totals from the start, and is removed by Llama 3.2 1B. Smaller models do not always use less energy per step either: Gemma 3 1B uses more than Llama 3.2 1B in the order task, and Qwen3 0.6B uses about as much as Qwen3 1.7B in both tasks. So the smallest model is not always the greener choice even per call, and which models are worth considering depends on the task.

\begin{figure}[htbp]
\centering
\begin{subfigure}{0.8\textwidth}
\centering
\includegraphics[width=\textwidth]{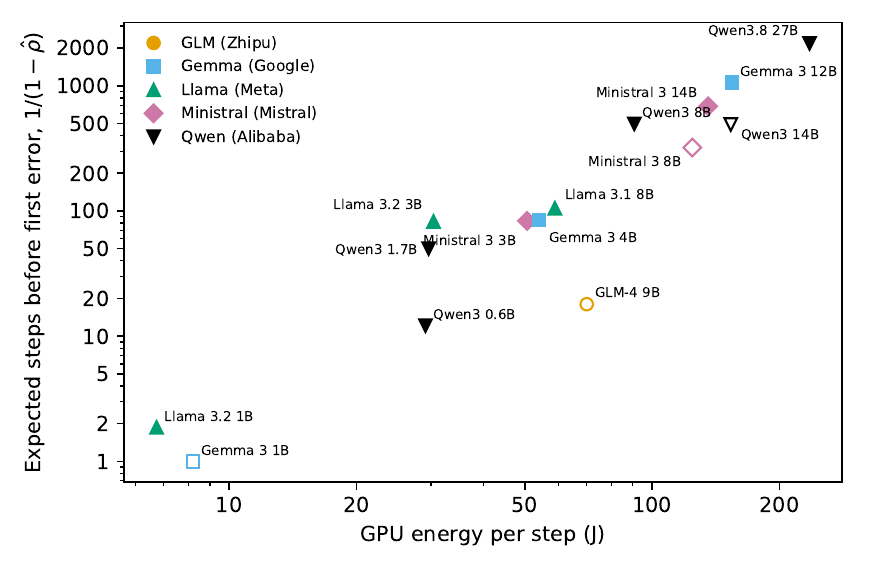}
\caption{Order task}
\end{subfigure}
\begin{subfigure}{0.8\textwidth}
\centering
\includegraphics[width=\textwidth]{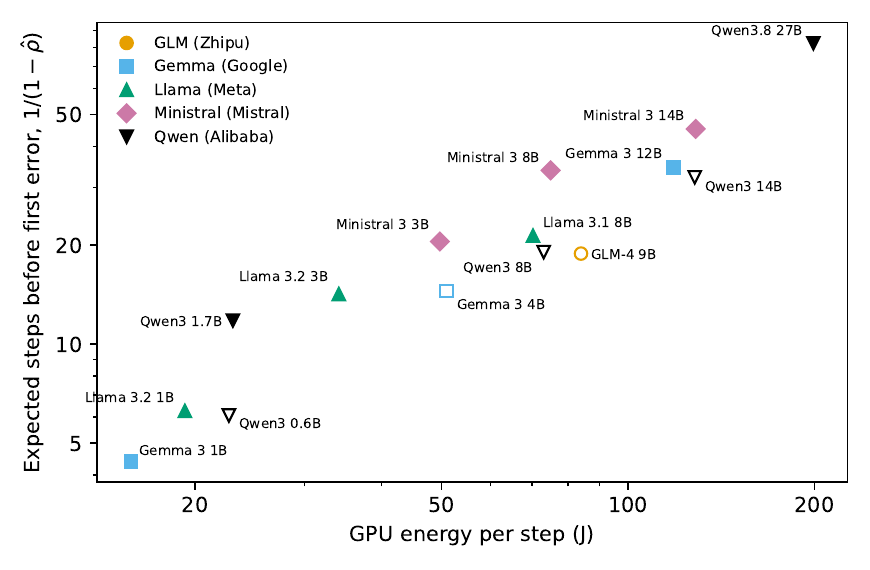}
\caption{Tools task}
\end{subfigure}
\caption{The 15 models on two per-step measures: GPU energy per step, and the expected number of steps before the first error, $1/(1-\hat\rho)$, both on log scales. Colours and markers show the developer. Hollow markers are models removed by the per-step screen. Better models lie towards the top left.}
\label{fig:pilot-models}
\end{figure}

\paragraph{Do the curves cross once? (H1)} Success falls with task length for every model and both tasks (Figure~\ref{fig:pilot}). This is expected, since a task of length $L$ succeeds only if all of its first $L$ steps are correct, and so it is not the test. The test is how the curves of different models compare. A constant per-step rate is only an approximation. For the most reliable models in the order task the success curve stays within 0.1 of the curve that a constant rate implies, but for several mid-sized models it departs by 0.3 to 0.45, because their errors come later in the task than a constant rate predicts. The condition our results need is weaker: that the curves of two models cross at most once for each resource (Section~\ref{sec:relax}). We compare energy and time per successful task for every pair of screened models in which the less reliable model uses less per step: 188 comparisons over the two tasks. In 11, all in the order task, the curves do not cross within 150 steps. In 129 they cross once, and in 48 more than once. If we count only lengths at which both curves rest on at least three successful runs, 33 comparisons still cross more than once. Nearly all of these are of two kinds. Either the two curves stay within about 30\% of each other over the lengths where they cross back and forth, so that the two models are close to a tie and 15 runs cannot separate them, or an extra crossing falls in the first few steps, because the smaller model sometimes fails at the very first step. The single-crossing condition therefore holds well where the choice matters, and fails mainly where it does not.

The energy of a step grows along the task in the pilot, so we also check the sufficient condition \eqref{eq:growth} of Observation~\ref{obs:growth} for the 100 pairs of models compared on energy. Energy per step varies from one step to the next, so we first fit a straight-line trend to each model's mean energy per step. On these trends the energy ratio $g$ rises with task length for 81 pairs (46 of 55 in the order task and 35 of 45 in the tools task): in most pairs the less reliable model, usually the smaller one, grows faster, which is case (ii). Condition \eqref{eq:growth} holds for 98 of the 100 pairs. One exception is Llama 3.2 3B against Ministral 3 3B, whose reliabilities are almost equal (0.9878 and 0.9880), so that $\rho_a/\rho_b$ is so close to one that any fall in $g$ breaks the condition. The other is Gemma 3 12B against Qwen3.8 27B, two of the most reliable models, for which no crossover is observed within 150 steps. Without the trend, using the raw energy of each step from step 10 on, the condition holds for 77 pairs; the other pairs break it at single steps because of noise in the energy of those steps.

\begin{figure}[htbp]
\centering
\begin{subfigure}{0.8\textwidth}
\centering
\includegraphics[width=\textwidth]{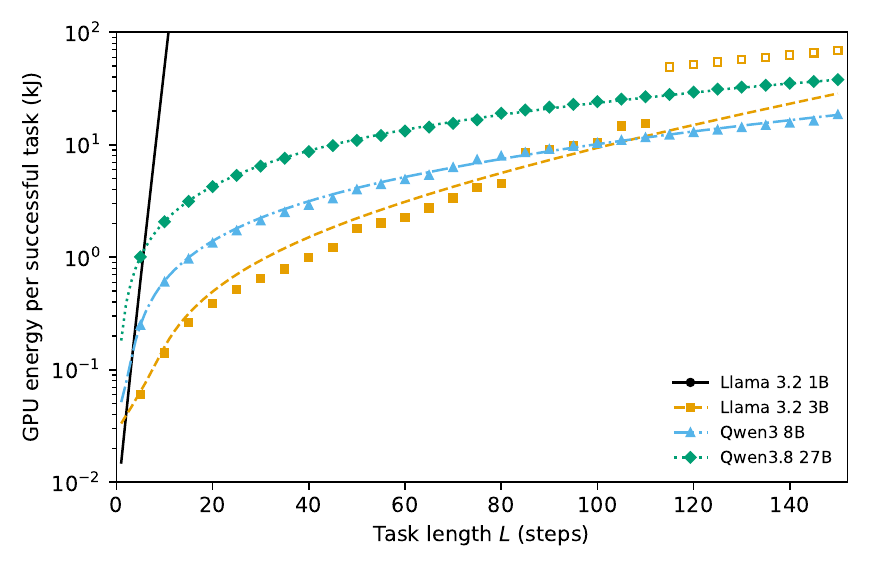}
\caption{Order task}
\end{subfigure}
\begin{subfigure}{0.8\textwidth}
\centering
\includegraphics[width=\textwidth]{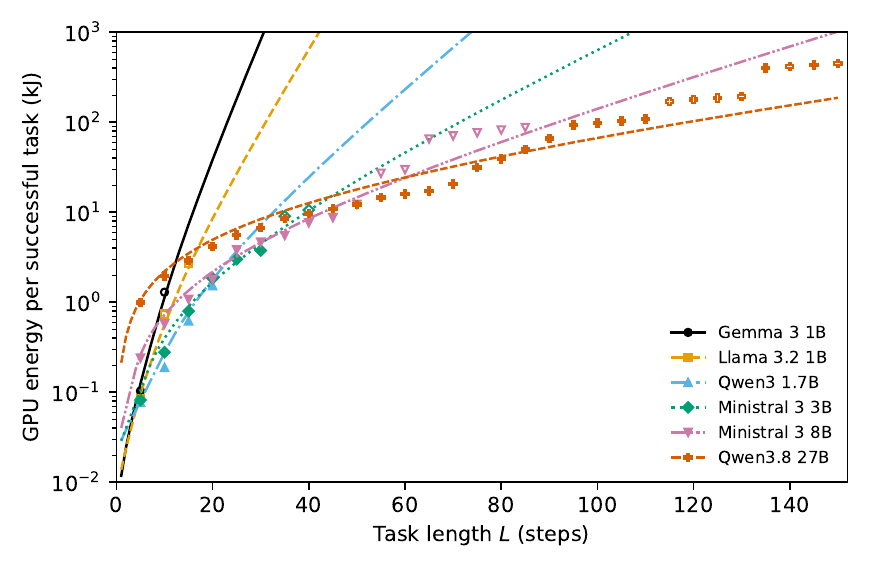}
\caption{Tools task}
\end{subfigure}
\caption{GPU energy per successfully completed task against task length, on a log scale, for the models that use the least energy per completed task at some length, and for the most reliable model. Lines use the estimated per-step reliability and the measured energy of each step; points are the observed values, every five steps, in the colour and marker of their model. Hollow points rest on fewer than three successful runs out of 15, so they are uncertain.}
\label{fig:pilot}
\end{figure}

\paragraph{Do the crossovers fall where predicted? (H2)} Table~\ref{tab:pilot-cross} compares three crossover lengths for energy, for the models shown in Figure~\ref{fig:pilot}. The closed form is the formula of Section~\ref{sec:general}, with the mean energy per step. The fitted value applies the same rule to the measured energy step by step, which allows energy per step to grow as the conversation gets longer, as in Observation~\ref{obs:growth}; in our runs it grows by up to about four times over 150 steps for the smallest models, and by a quarter or less for the largest. The observed value is read from the measured curves.

Over all pairs of screened models, the formula predicts 53 energy crossovers in the order task and 45 in the tools task. In the order task, 8 are predicted beyond 150 steps and none of these is observed. Of the 88 crossovers observed over both tasks, the closed form lies inside the 95\% interval of the observed crossover for 72, and the median difference between the two is 9 steps in the order task and 4 in the tools task. The 16 misses have plain causes. Eight involve a model whose observed curve runs out of successful runs before the predicted crossover: Llama 3.2 3B in the order task, with only one of its 15 runs still free of errors after 111 steps, and Qwen3 1.7B in the tools task, with none after 24 steps. The observed crossover then falls where those runs end, a few steps before the prediction. Four involve two models with almost the same reliability, where the formula is very sensitive to small errors in $\hat\rho$. Two involve Llama 3.1 8B, whose errors come late in the order task, and two are differences of one to three steps for tasks of two to five steps. The full tables for all pairs are in the repository. These predictions are made on the same runs as the observations, so they show that the model describes the data, not yet that it predicts new data. The held-out test of H2 is part of the full study.

\begin{table}[htbp]
\centering
\caption{Energy crossover lengths for the models shown in Figure~\ref{fig:pilot}: the task length beyond which the more reliable model uses less GPU energy per successfully completed task. Closed form: Section~\ref{sec:general} with the mean energy per step. Fitted: measured energy per step and the estimated reliability. Observed: from the measured success curves, with a bootstrap interval. A dash means no crossover within 150 steps. Where some resamples show no crossover, their share is given in brackets; where nearly all show none, only that share is given.}
\label{tab:pilot-cross}
\textit{(a) Order task}\\[2pt]
{\small\begin{tabular}{llrrrrc}
\toprule
Less reliable model & More reliable model & Energy ratio & \multicolumn{4}{c}{Energy crossover $L^\ast$ (steps)} \\
 & & per step & closed form & fitted & observed & observed 95\% CI \\
\midrule
Llama 3.2 1B & Llama 3.2 3B & 0.22 & 2 & 2 & 3 & 1--3 \\
Llama 3.2 1B & Qwen3 8B & 0.07 & 3 & 4 & 5 & 2--5 \\
Llama 3.2 1B & Qwen3.8 27B & 0.03 & 5 & 6 & 5 & 3--5 \\
Llama 3.2 3B & Qwen3 8B & 0.34 & 107 & 108 & 94 & 72--112 \\
Llama 3.2 3B & Qwen3.8 27B & 0.13 & 174 & -- & 112 & 83--112 (28\% none) \\
Qwen3 8B & Qwen3.8 27B & 0.39 & 612 & -- & -- & none ($>$99\%) \\
\bottomrule
\end{tabular}
}\\[8pt]
\textit{(b) Tools task}\\[2pt]
{\small\begin{tabular}{llrrrrc}
\toprule
Less reliable model & More reliable model & Energy ratio & \multicolumn{4}{c}{Energy crossover $L^\ast$ (steps)} \\
 & & per step & closed form & fitted & observed & observed 95\% CI \\
\midrule
Gemma 3 1B & Llama 3.2 1B & 0.82 & 2 & 2 & 1 & 1--15 (18\% none) \\
Gemma 3 1B & Qwen3 1.7B & 0.68 & 2 & 4 & 3 & 2--15 \\
Gemma 3 1B & Ministral 3 3B & 0.32 & 6 & 5 & 4 & 2--10 \\
Gemma 3 1B & Ministral 3 8B & 0.21 & 7 & 9 & 7 & 4--15 \\
Gemma 3 1B & Qwen3.8 27B & 0.08 & 10 & 13 & 15 & 6--15 \\
Llama 3.2 1B & Qwen3 1.7B & 0.84 & 2 & 5 & 5 & 3--20 \\
Llama 3.2 1B & Ministral 3 3B & 0.39 & 8 & 7 & 5 & 3--20 \\
Llama 3.2 1B & Ministral 3 8B & 0.26 & 9 & 12 & 10 & 6--20 \\
Llama 3.2 1B & Qwen3.8 27B & 0.10 & 14 & 17 & 19 & 7--20 \\
Qwen3 1.7B & Ministral 3 3B & 0.46 & 20 & 19 & 24 & 4--25 \\
Qwen3 1.7B & Ministral 3 8B & 0.31 & 20 & 24 & 24 & 12--25 \\
Qwen3 1.7B & Qwen3.8 27B & 0.12 & 28 & 32 & 24 & 17--25 \\
Ministral 3 3B & Ministral 3 8B & 0.66 & 21 & 32 & 31 & 15--44 \\
Ministral 3 3B & Qwen3.8 27B & 0.25 & 37 & 45 & 34 & 18--44 \\
Ministral 3 8B & Qwen3.8 27B & 0.38 & 55 & 61 & 49 & 23--88 \\
\bottomrule
\end{tabular}
}
\end{table}

\paragraph{Which model is greenest depends on the length and the kind of task (H4)} In both tasks the model that uses the least energy per completed task changes as tasks get longer (Figure~\ref{fig:pilot}). In the order task, Llama 3.2 1B is greenest for tasks of one or two steps, Llama 3.2 3B from 3 steps to about 100, and Qwen3 8B beyond. Between about 80 and 100 steps the lead passes back and forth between Llama 3.2 3B, Llama 3.1 8B and Ministral 3 3B, whose curves are close there. Taking energy and time together, the three regimes of Section~\ref{sec:properties} appear between Llama 3.2 3B and Qwen3 8B: the time crossover (96 steps by the formula, 83 observed) comes before the energy crossover (107 and 94), and between them Qwen3 8B is faster while Llama 3.2 3B still uses less energy. In the tools task the greenest model moves up in size much sooner: Llama 3.2 1B up to 4 steps, Qwen3 1.7B up to about 20, Ministral 3 3B up to about 30, Ministral 3 8B up to about 50 to 60, and Qwen3.8 27B beyond. Only six of the 15 models completed any tools task of 50 steps. The stakes are large in both directions. At 20 steps in the order task, Llama 3.2 3B used 0.39\,kJ per completed task, about a tenth of the 4.2\,kJ of the largest model. At 100 steps, Qwen3 1.7B used 39\,kJ, almost four times the 10.4\,kJ of Qwen3 8B. In the order task the largest model is never efficient within 150 steps when tasks can be retried freely, while in the tools task it becomes the greenest choice at about 50 to 60 steps. The same models give different answers for different kinds of task, which is why per-step reliability has to be measured for the task family in question, and not read from a general leaderboard.

\paragraph{Tasks that cannot be undone (variant V2)} The picture changes again when a task must succeed at its first attempt. With a single attempt and a required chance of completion of 0.9, the estimated reliabilities allow order tasks of up to about 9 steps for Llama 3.2 3B, 52 for Qwen3 8B, 72 for Ministral 3 14B, 112 for Gemma 3 12B and 230 for the largest model. From about 112 to 230 steps the largest model is then the only feasible choice, although it is never efficient when retries are free. In the tools task no model meets this requirement beyond about 9 steps. For such tasks the framework's answer is not a model at all: the task must be changed, for example by asking a person to confirm each irreversible change (Section~\ref{sec:discussion}). The task profile, not only the models, decides the answer.

\paragraph{A first look at decision value (H3)} H3 compares the framework with per-call rules. Two of these rules can be tested on the pilot data: always use the largest model, and always use the smallest. For each task length from 1 to \HThreeOrderLengths{} steps, with retries allowed, the framework chooses the model with the least fitted energy per successful task. We compare the observed energy of this choice, and of each rule, with that of the model that used the least in hindsight. In the order task the framework's choice is within 10\% of the best model at \HThreeOrderEnergyFwWithin{} of the \HThreeOrderLengths{} lengths, and its median regret is \HThreeOrderEnergyFwMedian\%. Always using the largest model, \HThreeOrderLargestName, has a median regret of \HThreeOrderEnergyLargeMedian\%: at 20 steps it uses about \HThreeOrderEnergyLargeTimesAtTwenty{} times the energy of the best model. Always using the smallest model, \HThreeOrderSmallestName, has a median regret of \HThreeOrderEnergySmallMedian\% over the lengths at which it completes any task, and it completes no task longer than \HThreeOrderEnergySmallLastCompleted{} steps. In the tools task the framework's choice is within 10\% of the best at \HThreeToolsEnergyFwWithin{} lengths. The largest model does better here, since it becomes the greenest choice at about 50 to 60 steps, but at 10 steps it still uses about \HThreeToolsEnergyLargeTimesAtTen{} times the energy of the best model. The smallest model completes no tools task longer than \HThreeToolsEnergySmallLastCompleted{} steps. Time gives a similar picture, although in the tools task the framework's choice is within 10\% of the best at only \HThreeToolsTimeFwWithin{} lengths. This is a partial test. It covers two of the five per-call rules, and the framework is judged on the same runs that were used to fit it. The full study tests the other three rules on held-out tasks.

\paragraph{What the pilot does not show} The pilot uses two synthetic task families, open-weight models of up to 27 billion parameters, one GPU and 15 runs per model and task. Its energy covers the GPU board only, and its predictions are tested on the same runs that produced them. It tests H3 against only two of the five per-call rules. The full study of Section~\ref{sec:evaluation} addresses each of these limits, using $\tau$-bench tasks, held-out tasks, larger and closed models, and the per-call rules as baselines.

\section{Discussion\label{sec:discussion}}

We return to the selection question: which model should an agent use? Our answer is that it depends on the task in a way that can be predicted. For short tasks, and tasks that can be retried freely, a smaller model is often the efficient choice, and right-sizing saves energy, cost and time. For long tasks, and especially for tasks whose actions cannot be undone, a larger and more reliable model can use less of every resource per completed task. Between these regimes lies a band where the choice depends on what the decision maker values. This section discusses what this answer means for practice, its limits, and the main objections to it.

\subsection{Is smaller greener?}

The view that small models are the sustainable choice for agents \citep{belcak2025slm} is right for many tasks, and our framework supports it where tasks are short. But it is a claim about energy per call. Per completed task, it holds only up to a crossover length that depends on the gap in reliability between models. A policy of always using the smallest model can therefore increase energy use. This complements the measurements of \citet{panigrahy2026epg}, who show that a workflow that looks cheap per call can be costly per successful goal. Our results add the length of the task: the same two models can change places as tasks grow longer. The pilot shows both sides: in the order task, on tasks of 20 steps a 3B model used about a tenth of the energy of a 27B model per completed task, while on tasks of 100 steps a 1.7B model used almost four times as much as an 8B model. In the harder tools task, the 27B model was already the greenest choice from about 50 to 60 steps (Section~\ref{sec:experiments}). For sustainability research, the lesson is that the functional unit is not a detail of reporting. It can reverse the ranking of two systems.

\subsection{What this means for practice}

The framework suggests three things that the designers of agents can do beyond choosing a model. First, shorter tasks favour smaller models, so breaking a long task into short parts that can be checked separately can save resources. Second, for tasks with irreversible actions, the chance constraint can rule out every model (variant V2), as it does in our tools task for tasks of more than about nine steps. The right response is then to change the task, for example by asking a person to confirm before an irreversible step, rather than to look for a different model. Third, a model can be changed during a task: an agent could start with a small model and switch to a larger one when the remaining part of the task passes a crossover length. We return to this in Section~\ref{sec:conclusion}.

\subsection{Limits of the model}

The basic case assumes that steps succeed or fail independently and that any failed step spoils the attempt. Real agents sometimes recover from errors, and errors in one step can make errors in later steps more likely. Section~\ref{sec:relax} shows that the main results survive when the success curve is measured directly rather than assumed, provided the single-crossing condition holds, and this condition is tested as H1. The model also treats all attempts at a task as alike. In practice a second attempt may use what was learned in the first, or be handed to a different model, which is the idea behind cascades. In that case, as \citet{panigrahy2026epg} note, the resource per successful task can depend on the retry limit. The model also covers only tasks whose steps run in sequence. The rules of \citet{yang2026reliableworkflows} for parallel steps and feedback loops suggest how it could be extended to other workflow structures. Finally, the model takes the number of steps as known. In practice it is an estimate, and its uncertainty adds to the uncertainty of the crossover lengths.

\subsection{Measuring energy and carbon}

Energy is the hardest quantity to measure. For open-weight models it can be measured on known hardware; for closed models it must be estimated, and the estimates can be wide \citep{jegham2025hungry}. We keep measured and estimated values apart and carry the ranges of estimates through to the result. Our measures also cover only the energy used to run the model. They leave out the energy used to make the hardware and to train the model, as well as water use. Finally, cheaper and greener inference may lead to more use overall. This rebound effect means that savings per task do not translate directly into lower total energy use, and we do not claim that they do.

\subsection{A changing market}

Models, prices and hardware change every month, so any efficient set is a snapshot. A model removed by the per-step screen can become efficient again if its price falls, if the model that dominated it is withdrawn, or if a new objective is added. A new model, however, can never bring back a model that was dominated before, since it does not change the values of existing models. For this reason the framework keeps every candidate and its dated values, and recomputes the efficient sets when prices, models or measurements change. This is why we see the framework as a continuously maintained decision-support service rather than a one-off study. In the service, a measurement layer keeps the dated per-step values of each candidate, a core engine computes the efficient sets for each task family and each interval of task length, and an interface takes a task profile and returns the recommendation together with the efficient set, the weights used and the source of every value. Agent frameworks can query the service at run time, and people can use it through a web page built on the same interface.

\subsection{Relation to routing}

Routing methods \citep{ong2025routellm,fu2026inflation,li2026progrouter} learn to send each request or step to a suitable model, and often do so well. Our framework does not replace them. It answers a different part of the selection question: which models are worth considering for a task, when the choice needs preferences at all, and how energy enters the choice. Its efficient sets can limit the models a router chooses from, and its per-task measures can be used to evaluate routers in the same terms of energy, cost and time.

\section{Conclusion and future directions\label{sec:conclusion}}

This paper asked which language model an AI agent should use, so that its tasks are completed reliably with the least energy, cost and time. We argued that this selection question must be answered per successfully completed task, not per call. On this basis we built a general decision model and showed that it has a simple structure. Models that are worse on every per-step measure can be removed for all tasks. Between any two remaining models, each resource has one crossover task length, and the crossovers divide tasks into three regimes, only one of which needs the preferences of the decision maker. The same model covers binary and graded outcomes, and tasks whose actions can or cannot be undone. The resulting two-stage framework separates questions of fact, which models are efficient, from questions of value, which trade-off to accept. It also gives right-sizing a precise meaning, which sometimes points to a smaller model and sometimes to a larger one.

Several directions follow. First, the full study set out in Section~\ref{sec:evaluation} will test whether the structure seen in the pilot of Section~\ref{sec:experiments} holds on agentic benchmarks, with held-out tasks and per-call rules as baselines. Second, the model can be extended to switching models during a task, which turns the single choice into a policy over the course of the task. Third, agents that work together in teams raise the question of how crossovers combine across agents. Fourth, a study with practitioners can test whether the trade-offs that the framework shows match the choices that the designers of agents actually make, and how their preferences can best be elicited. Fifth, the measure of resources can be widened to include the energy of hardware and training, and water use. Finally, the framework will be released as an open service with its measurement harness and data, so that its efficient sets can be updated as the market changes.

\bibliographystyle{elsarticle-harv}
\bibliography{references}

\end{document}